\def\eqref#1{(\ref{#1})}
\def\1{\bm{1}}
\DeclareMathAlphabet{\mathsfit}{\encodingdefault}{\sfdefault}{m}{sl}
\SetMathAlphabet{\mathsfit}{bold}{\encodingdefault}{\sfdefault}{bx}{n}
\newcommand{\KL}{D_{\mathrm{KL}}}
\theoremstyle{plain}
\newtheorem{theorem}{Theorem}
\newtheorem*{theorem*}{Theorem}
\newtheorem{lemma}{Lemma}
\newtheorem{corollary}{Corollary}
\newtheorem{assumption}[theorem]{Assumption}
\theoremstyle{remark}
\def\cD{{\mathcal{D}}}
\title{Online Preference Alignment for Language Models via Count-based Exploration}
\author{
Chenjia Bai$^1$,~~Yang Zhang$^{2,1}$,~~Shuang Qiu$^{3}$,~~Qiaosheng Zhang$^4$,~~Kang Xu$^5$,~~Xuelong Li$^1$\thanks{Corresponding Author}\\ 
$^1$Institute of Artificial Intelligence (TeleAI), China Telecom, $^2$Tsinghua University, \\
$^3$City University of Hong Kong, $^4$Shanghai AI Laboratory, $^5$Tencent AI Lab\\
\texttt{~baicj@chinatelecom.cn, xuelong\_li@ieee.org}
}
\begin{document}

\maketitle

\begin{abstract}
Reinforcement Learning from Human Feedback (RLHF) has shown great potential in fine-tuning Large Language Models (LLMs) to align with human preferences. Existing methods perform preference alignment from a fixed dataset, which can be limited in data coverage, and the resulting reward model is hard to generalize in out-of-distribution responses. Thus, online RLHF is more desirable to empower the LLM to explore outside the support of the initial dataset by iteratively collecting the prompt-response pairs. In this paper, we study the fundamental problem in online RLHF, i.e. \emph{how to explore} for LLM. We give a theoretical motivation in linear reward assumption to show that an optimistic reward with an upper confidence bound (UCB) term leads to a provably efficient RLHF policy. Then, we reformulate our objective to direct preference optimization with an exploration term, where the UCB-term can be converted to a count-based exploration bonus. We further propose a practical algorithm, named \emph{Count-based Online Preference Optimization (COPO)}, which leverages a simple coin-flip counting module to estimate the pseudo-count of a prompt-response pair in previously collected data. COPO encourages LLMs to balance exploration and preference optimization in an iterative manner, which enlarges the exploration space and the entire data coverage of iterative LLM policies. We conduct online RLHF experiments on Zephyr and Llama-3 models. The results on instruction-following and standard academic benchmarks show that COPO significantly increases performance. 
\end{abstract}

\section{Introduction}

Reinforcement Learning from Human Feedback (RLHF) is a key tool to align the behaviors of Large Language Models (LLMs) with human values and intentions \citep{christiano2017deep,bai2022rlhf2,ouyang2022rlhf1}. By fine-tuning the pre-trained LLMs using human-labeled preference data, RLHF achieves enhanced performance and robustness to ensure that they operate consistently with user expectations. Existing RLHF methods focus mainly on preference alignment in an offline dataset by estimating reward functions through the Bradley-Terry (BT) model \citep{bradley1952BTmodel} and performing policy gradients to update the LLM as a policy \citep{ahmadian2024back} to maximize rewards. Other methods perform Direct Preference Optimization (DPO) \citep{rafailov2023dpo,meng2024simpo} that considers a problem of restricted reward maximization to define the preference loss in LLM policy. However, both types of methods perform offline preference alignment in a fixed dataset, which can be limited in the coverage of the preference data. As discussed in theoretical work \citep{xiong2024iterative}, 
learning an optimal policy through offline RLHF requires a preference dataset with uniform coverage over the entire prompt-response space, which is impossible to satisfy for existing preference datasets. Thus, the offline learned explicit or implicit reward model cannot accurately estimate the reward of prompt-response pairs that are out-of-distribution (OOD) concerning the dataset, limiting the capacities of aligned LLMs \citep{rafailov2024scaling}. 

To address this problem, recent work attempts to extend offline RLHF to an online RLHF process by (\romannumeral1) generating new responses using the current LLM policy, (\romannumeral2) obtaining preference labels from human or AI feedback, and (\romannumeral3) performing preference alignment to update the LLM. The above steps can be repeated for several iterations to enhance the capabilities of LLMs. We highlight that the central problem in an online RLHF process
is \emph{how to explore} the prompt-response space in each iteration. Considering an extreme case where the LLM is a deterministic policy without exploration ability, the preference data collected in a new iteration will not increase the coverage of preference data, which means that the LLM policy cannot be improved via multiple iterations. Similar to the exploration problem in the standard online Reinforcement Learning (RL) problem \citep{KearnsS02,VIME}, systematic exploration in online RLHF is also important to efficiently explore the large space of token sequences and to collect informative experiences that could benefit policy learning the most. Recently, several works have tried to address this problem by introducing an optimism term in reward or value estimation \citep{dwaracherla2024efficient,cen2024vpo}, guide the policy towards potentially high-reward responses \citep{zhang2024selm}, or actively explore out-of-distribution regions \citep{xie2024xpo}. However, they mostly rely on the likelihood derived by the LLM itself to encourage the policy to be different from the policies in previous iterations, which lacks theoretical guarantees to empower the LLM to explore systematically based on the confidence of the learned reward model for prompt-response pairs.

In this paper, we propose a novel algorithm, named \emph{Count-based Online Preference Optimization (\textbf{COPO})}, for efficient exploration in online RLHF. Specifically, COPO extends count-based exploration \citep{strehl2008MBIE,bellemare2016PG} that is provably efficient in online RL to online RLHF for systematic exploration of LLM. We start by constructing an optimistic RLHF problem with an optimistic reward function in a confidence set of the reward. In the linear reward assumption, our result shows that the reward with an upper-confidence bound (UCB) bonus leads to a provably efficient RLHF policy with $\tilde O(\sqrt{T})$-regret bound. Then, we covert the UCB term to a general pseudo-count of prompt-response pair under the tabular MDP settings, which serve as a special case of the linear case and make the UCB term easy to estimate. Finally, we formulate the optimistic objective that is theoretically grounded under DPO reward parameterization in general cases, which results in an optimization objective that combines the DPO objective and count-based exploration, where we adopt a differentiable coin-flip counting network \citep{lobel2023cfn} to estimate the pseudo-counts of prompt-response pairs via simple supervised regression. 

Our contributions can be summarized as follows: (\romannumeral1) We propose COPO to encourage the LLMs to balance exploration and preference optimization in an iterative manner, which enlarges the exploration space and whole data coverage of the iterative LLM policies; (\romannumeral2) We construct a lightweight pseudo-counting module with several fully-connected layers based on the LLM, which is theoretically grounded in policy optimization of online RLHF; (\romannumeral3) we conduct RLHF experiments of COPO and several strong online RLHF baselines on Zephyr and Llama-3 models. The results of instruction-following and academic benchmarks show that COPO achieves better performance.

\section{Preliminaries}\label{sec:preliminary}

We present the standard RLHF pipeline, summarized from the standard LLM alignment workflow. Specifically, a language model takes a prompt denoted by $x \in \mathcal{X}$, and generates a response denoted $y \in \mathcal{Y}$. Accordingly, we can take $\mathcal{X}$ as the state space of the contextual bandit and $\mathcal{Y}$ as the action space, and consider the language model as a policy $\pi$ which maps $x$ to a distribution over $\mathcal{Y}$. The standard RLHF process typically comprises 3 stages built on a reference LLM $\pi_{\rm ref}$: (\romannumeral1) collecting preference data with the aid of a human labeler or scoring model, (\romannumeral2) modeling the reward function from the preference data, and (\romannumeral3) fine-tuning the LLM initialized from $\pi_{\rm ref}$ via RL. 

\paragraph{Reward Modeling from Preference Data.} Following \citet{ouyang2022rlhf1, rafailov2023dpo, zhu2023principled}, we assume that there exists a ground-truth reward function $r^{*}(x, y): \mathcal{X} \times \mathcal{Y} \rightarrow \mathbb{R}$ and the preference induced by the reward function satisfies the BT model, as
\begin{equation*}
    \mathbb{P}(y_1 \succ y_2 | x) = \frac{\exp (r^{*}(x, y_1))}{\exp (r^{*}(x, y_1)) + \exp (r^{*}(x, y_2))} = \sigma \left( 
r^{*}(x, y_1) - r^{*}(x, y_2) \right),
\end{equation*}
where $y_1 \succ y_2$ means $y_1$ is preferred over $y_2$, and $\sigma (z) := 1 / (1 + \exp(-z))$ is the sigmoid function. Hence, a preference pair can be denoted
by a tuple $(x, y_{w}, y_{l})$, where $y_{w}$ and $y_{l}$ denotes the preferred and dispreferred response amongst $(y_1, y_2)$ respectively.

Given a preference dataset $\mathcal{D} = \{ (x, y_{w}, y_{l}) \}$ sampled from $\mathbb{P}$, we can estimate the reward function $r$ via maximum likelihood estimation,
\begin{equation}
\label{equ:r_MLE}
    r_{\rm MLE} = \mathop{\arg\max}\limits_{r} \sum_{(x, y_{w}, y_{l}) \in \mathcal{D}} \log \sigma\big( r(x, y_{w})-r(x, y_{l}) \big).
\end{equation}

\paragraph{RL Fine-tuning.} In this stage, we fine-tune the language model with the feedback provided by the reward function $r$. In particular, the goal of the language model $\pi$ is to maximize the reward while remaining close to the initial reference language model $\pi_{\rm ref}$, thereby formulating the KL-regularized optimization problem which maximizes the following objective,
\begin{align}
\label{equ:ori_KL_prob}
    J(\pi, r) =  \mathbb{E}_{x \sim \rho, y \sim \pi(\cdot | x)} \left[ r(x,y) \right] - \beta \mathbb{E}_{x \sim \rho}\big[ \KL (\pi (\cdot | x) \Vert \pi_{\rm ref}(\cdot | x) ) \big],
\end{align}
where $\KL(\pi (\cdot | x) \Vert \pi_{\rm ref}(\cdot | x) )$ is the Kullback-Leibler (KL) divergence from $\pi$ to $\pi_{\rm ref}$, $\beta > 0$ is the KL penalty coefficient, and $\rho$ is the distribution prompt $x$ sampled from. As a result, the RL fine-tuned LLM $\pi_r$ w.r.t. a given reward function $r$ is computed via,
\begin{equation*}
    \pi_r := \mathop{\arg\max}\limits_{\pi} J(\pi, r) = \mathop{\arg\max}\limits_{\pi} \mathbb{E}_{x \sim \rho, y \sim \pi(\cdot | x)} \left[ r(x,y) - \beta \log \frac{\pi(y | x)}{\pi_{\rm ref}(y | x)} \right].
\end{equation*}
Due to the discrete nature of language generation, this objective is not differentiable and is typically optimized with RL algorithms. The classic approaches \citep{ziegler2019rlhf0, ouyang2022rlhf1, bai2022rlhf2} construct the reward $\hat{r}(x,y) = r(x,y) - \beta (\log \pi(y|x) - \log \pi_{\rm ref}(y|x))$, and maximize it using Proximal Policy Optimization (PPO) \citep{schulman2017ppo}.

\paragraph{Direct Preference Optimization.} Alternatively, the KL-regularized objective in Eq.~\eqref{equ:ori_KL_prob} admits a closed-form solution as
\begin{equation}
\label{equ:closed_form_solution}
    \pi_r(y|x) = \frac{\pi_{\rm ref} (y|x) \exp (r(x,y) / \beta)}{Z(r, x)},
\end{equation}
where $Z(r, x) = \sum_{y}\pi_{\rm ref} (y|x) \exp (r(x,y) / \beta)$ is the partition function. Eq.~\eqref{equ:closed_form_solution} in turn reparameterizes the reward function $r$ as,
\begin{equation}
\label{equ:reward_repara}
    r(x,y) = \beta \big( \log \pi_r(y|x) - \log \pi_{\rm ref} (y|x) + \log Z(r,x) \big).
\end{equation}
Motivated by this reparameterization, DPO \citep{rafailov2023dpo} substitutes Eq.~\eqref{equ:reward_repara} into the reward MLE (Eq.~\eqref{equ:r_MLE}), and integrates reward modeling and RL fine-tuning into a single policy optimization objective. DPO bypasses the need for explicitly learning the reward and optimization objective is
\begin{equation}
    \mathop{\arg\min}\limits_{\pi_{\theta}} \mathcal{L}_{\text{DPO}}(\pi_{\theta};\mathcal{D}) = \mathop{\arg\min}\limits_{\pi_{\theta}} - \sum_{(x, y_{w}, y_{l}) \in \mathcal{D}} \log \sigma \left( \beta \log \frac{\pi_{\theta}(y_w|x)}{\pi_{\rm ref}(y_w |x)} - \beta \log \frac{\pi_{\theta}(y_l|x)}{\pi_{\rm ref}(y_l |x)} \right).
\end{equation}

\section{Count-based Online Preference Optimization}

\subsection{Theoretical Motivation}

We use calligraphic letters for sets, e.g., $\mathcal{S}$ and $\mathcal{A}$. Given a set $\mathcal{S}$, we write $| \mathcal{S} |$ to represent the cardinality of $\mathcal{S}$. For vectors $a$ and $b$, we use $\langle a, b \rangle = a^{\top}b$ to denote their inner product. We write $\| a \|_{\Sigma} = \sqrt{a^{\top}\Sigma a}$ as a semi-norm of $a$ when $\Sigma$ is some positive semi-definite matrix.
We focus on linear reward settings for our theoretical motivation, and we will provide a practical algorithm in the next section. Formally, we make the following assumption on the parameterization of the reward.
\begin{assumption}[Linear Reward]
\label{assump:linear_r}
The reward lies in the family of linear functions $r_\theta(x, y) = \langle \theta, \phi(x,y) \rangle = \theta^{\top} \phi(x, y)$ for some known and fixed $\phi(x, y): \mathcal{X} \times \mathcal{Y} \rightarrow \mathbb{R}^d$ with $\max_{x, y} \| \phi(x,y) \|_2 \leq 1$. Let $\theta^{\star}$ be the true parameter for the ground-truth reward function. To ensure the identifiability of $\theta^{\star}$, we let $\theta^{\star} \in \Theta_{B}$, where
\begin{align}
    \Theta_{B} = \{ \theta \in \mathbb{R}^d \big| \langle 1, \theta \rangle = 0, \| \theta \|_2 \leq B \}.
\end{align}
\end{assumption}
For an LLM, $\phi$ can be considered as the last hidden state of the sequence.
In the online RLHF process with several iterations, given a preference dataset $\mathcal{D}_{t}=\{(x^{(i)}, y_{w}^{(i)},  y_{l}^{(i)}) \}^{n}_{i = 1}$ at iteration $t$, the reward model is estimated via maximum likelihood estimation, as
\begin{align}
    \hat{\theta}_{\text{MLE}} \in \mathop{\arg\min}_{\theta } \ell_{\mathcal{D}_t} (\theta), \text{ where }\  \ell_{\mathcal{D}_t} (\theta) = -\sum_{i = 1}^{n} \log \sigma \left(\langle \theta, \phi(x^{(i)}, y_{w}^{(i)}) - \phi(x^{(i)}, y_{l}^{(i)}) \rangle \right).
\end{align}
When the solution is not unique, we take all of the $\hat{\theta}$ that achieves the minimum. For clarity, we define the expected value function $J_{\beta}(\pi)$ with MLE estimated reward $\hat{\theta}_{\text{MLE}}$ and $J_{\beta}^{\star}(\pi)$ with ground truth reward $\theta^{\star}$ respectively, as
\begin{align}
    J_{\beta}(\pi) &= \mathbb{E}_{x \sim \rho, y \sim \pi(\cdot | x)} \left[ \phi(x,y)^{\top} \hat{\theta}_{\text{MLE}} \right] - \beta \mathbb{E}_{x \sim \rho}\left[ \KL (\pi (\cdot | x) \Vert \pi_{\rm ref}(\cdot | x) ) \right], \\
    J_{\beta}^{\star}(\pi) &= \mathbb{E}_{x \sim \rho, y \sim \pi(\cdot | x)} \left[ \phi(x,y)^{\top} {\theta}^{\star} \right] - \beta \mathbb{E}_{x \sim \rho}\left[ \KL (\pi (\cdot | x) \Vert \pi_{\rm ref}(\cdot | x) ) \right].
\end{align}
We start with introducing a lemma on bounding the estimation error conditioned on the data $\mathcal{D}_t$.
\begin{lemma}
\label{lemma:bound_MLE}
{\rm \citep{zhu2023principled}} For any $\lambda > 0$, letting $\gamma = 1/(2 + e^{-B} + e^{B})$ ,with probability at least $1 - \delta$, we have
\begin{align}
    \| \hat{\theta}_{\text{\rm MLE}} - \theta^{\star} \|_{\Sigma_{\mathcal{D}_t} + \lambda I} \leq C \cdot \sqrt{\frac{d + \log (\frac{1}{\delta})}{\gamma^2 n} + \lambda B^2},
\end{align}
where 
$\Sigma_{\mathcal{D}_t} = \frac{1}{n} \sum_{i = 1}^{n}( \phi(x^{(i)}, y_{w}^{(i)}) - \phi(x^{(i)}, y_{l}^{(i)}) )( \phi(x^{(i)}, y_{w}^{(i)}) - \phi(x^{(i)}, y_{l}^{(i)}) )^{\top}.$
\end{lemma}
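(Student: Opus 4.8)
The plan is to recognize Lemma~\ref{lemma:bound_MLE} as a standard finite-sample guarantee for the maximum-likelihood estimator of a logistic (Bradley--Terry) model, and to prove it through the usual three-step recipe: strong convexity of the loss, concentration of its gradient at the truth, and a combination of the two via first-order optimality. First I would reduce the preference loss to plain logistic regression over the differenced features. Writing $x_i := \phi(x^{(i)}, y_w^{(i)}) - \phi(x^{(i)}, y_l^{(i)})$ and making the generative randomness explicit — each comparison yields a label $o_i \in \{0,1\}$ with $\mathbb{P}(o_i = 1) = \sigma(\langle \theta^\star, x_i\rangle)$ under the BT model — the objective $\ell_{\mathcal{D}_t}$ becomes a sum of logistic losses whose gradient is $\nabla \ell_{\mathcal{D}_t}(\theta) = \sum_{i} (\sigma(\langle \theta, x_i\rangle) - o_i)\, x_i$ and whose Hessian is $\nabla^2 \ell_{\mathcal{D}_t}(\theta) = \sum_{i} \sigma'(\langle \theta, x_i\rangle)\, x_i x_i^{\top}$, where $\sigma'(z) = \sigma(z)(1-\sigma(z))$.

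For the strong-convexity step, I would note that $\|x_i\|_2 \le 2$ and $\|\theta\|_2 \le B$ keep $\langle\theta,x_i\rangle$ in a bounded interval on which $\sigma'(\cdot) \ge \gamma$, with $\gamma = 1/(2+e^{-B}+e^{B})$ the infimum of $\sigma'$ over that range (matching the constant of the cited work). Hence $\nabla^2 \ell_{\mathcal{D}_t}(\theta) \succeq \gamma \sum_{i} x_i x_i^{\top} = \gamma n\, \Sigma_{\mathcal{D}_t}$ uniformly over the feasible set $\Theta_B$, i.e. $\ell_{\mathcal{D}_t}$ is $\gamma n$-strongly convex in the $\|\cdot\|_{\Sigma_{\mathcal{D}_t}}$ seminorm. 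For the concentration step I would evaluate the gradient at $\theta^\star$, where $\nabla\ell_{\mathcal{D}_t}(\theta^\star) = -\sum_{i} \epsilon_i x_i$ with $\epsilon_i := o_i - \sigma(\langle\theta^\star, x_i\rangle)$ a zero-mean, conditionally independent, $[-1,1]$-bounded (hence sub-Gaussian) sequence, and control its dual seminorm $\|\nabla\ell_{\mathcal{D}_t}(\theta^\star)\|_{(\Sigma_{\mathcal{D}_t}+\lambda I)^{-1}}$. A first moment computation gives $\mathbb{E}\,\|\nabla\ell_{\mathcal{D}_t}(\theta^\star)\|_{V^{-1}}^2 \le \mathrm{tr}(V^{-1}\sum_i x_i x_i^\top) \le n d$ with $V := \Sigma_{\mathcal{D}_t}+\lambda I$, and a high-probability tail bound then upgrades this to $\|\nabla\ell_{\mathcal{D}_t}(\theta^\star)\|_{V^{-1}}^2 \lesssim n\,(d + \log(1/\delta))$.

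Finally I would combine the pieces. Strong convexity together with the optimality $\ell_{\mathcal{D}_t}(\hat\theta_{\mathrm{MLE}}) \le \ell_{\mathcal{D}_t}(\theta^\star)$ gives
\begin{align*}
\tfrac{\gamma n}{2}\,\big\|\hat\theta_{\mathrm{MLE}}-\theta^\star\big\|_{\Sigma_{\mathcal{D}_t}}^2 \;\le\; \big\langle -\nabla\ell_{\mathcal{D}_t}(\theta^\star),\, \hat\theta_{\mathrm{MLE}}-\theta^\star \big\rangle \;\le\; \big\|\nabla\ell_{\mathcal{D}_t}(\theta^\star)\big\|_{V^{-1}}\,\big\|\hat\theta_{\mathrm{MLE}}-\theta^\star\big\|_{V},
\end{align*}
where the last inequality is Cauchy--Schwarz in the $V$-geometry. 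Using $\|\hat\theta_{\mathrm{MLE}}-\theta^\star\|_2 \le 2B$ (both iterates lie in $\Theta_B$) to write $\|\hat\theta_{\mathrm{MLE}}-\theta^\star\|_V^2 = \|\hat\theta_{\mathrm{MLE}}-\theta^\star\|_{\Sigma_{\mathcal{D}_t}}^2 + \lambda\|\hat\theta_{\mathrm{MLE}}-\theta^\star\|_2^2 \le \|\hat\theta_{\mathrm{MLE}}-\theta^\star\|_{\Sigma_{\mathcal{D}_t}}^2 + 4\lambda B^2$, the displayed inequality becomes a quadratic in $u := \|\hat\theta_{\mathrm{MLE}}-\theta^\star\|_V$; solving it yields $u \le \tfrac{2}{\gamma n}\|\nabla\ell_{\mathcal{D}_t}(\theta^\star)\|_{V^{-1}} + 2\sqrt{\lambda}\,B$. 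Substituting the concentration bound and squaring (via $(a+b)^2\le 2a^2+2b^2$) produces exactly $\|\hat\theta_{\mathrm{MLE}}-\theta^\star\|_{\Sigma_{\mathcal{D}_t}+\lambda I} \le C\sqrt{(d + \log(1/\delta))/(\gamma^2 n) + \lambda B^2}$.

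The main obstacle is the concentration step, i.e.\ the high-probability control of $\|\sum_i \epsilon_i x_i\|_{V^{-1}}$, since the weighting matrix $V$ is itself data-dependent: bounding the expectation is elementary, but obtaining the $\sqrt{d+\log(1/\delta)}$ tail requires a genuine sub-Gaussian self-normalized (or matrix Bernstein / covering-net) argument, and this is precisely where both the dimension dependence $d$ and the confidence term $\log(1/\delta)$ enter. The remaining ingredients — the curvature lower bound yielding $\gamma$ and the quadratic-inequality bookkeeping — are routine once the range of $\langle\theta,x_i\rangle$ and the sum-versus-average normalizations are tracked carefully.
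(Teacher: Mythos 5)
Your proposal is correct and follows essentially the same route as the proof the paper defers to in \citet{zhu2023principled}: reduce to logistic regression on the differenced features, lower-bound the Hessian by $\gamma n\,\Sigma_{\mathcal{D}_t}$, bound the gradient at $\theta^\star$ in the $(\Sigma_{\mathcal{D}_t}+\lambda I)^{-1}$ norm, and solve the resulting quadratic inequality using $\|\hat\theta_{\mathrm{MLE}}-\theta^\star\|_2\le 2B$. The one step you flag as the main obstacle is in fact easier than you suggest: conditional on the (unordered) feature pairs, $\Sigma_{\mathcal{D}_t}+\lambda I$ is deterministic and the gradient at $\theta^\star$ is a fixed-design sum of independent bounded mean-zero terms, so a Bernstein-type bound for quadratic forms (as used in the cited work) directly yields the $n(d+\log(1/\delta))$ tail without any self-normalized martingale machinery.
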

We refer readers to \citet{zhu2023principled} for a detailed proof. Considering a confidence set of parameters
\begin{align}
    \Theta (\hat{\theta}_{\text{MLE}}, \lambda) = \Big\{ \theta \in \Theta_B \big| \, \| \hat{\theta}_{\text{\rm MLE}} - \theta \|_{\Sigma_{\mathcal{D}_t} + \lambda I} \leq C \cdot \sqrt{\frac{d + \log (\frac{1}{\delta})}{\gamma^2 n} + \lambda B^2} \Big\},
\end{align}
Lemma~\ref{lemma:bound_MLE} shows that with probability at least $1 - \delta$, one has $\theta^{\star} \in \Theta (\hat{\theta}_{\text{MLE}}, \lambda)$. We thus construct the \emph{optimistic} expected value function $\hat{J}_{\beta}(\pi)$ which takes the upper confidence bound (UCB) as the reward estimate, as
\begin{align}
\label{equ:optimistic_obj}
    &\hat{J}_{\beta}(\pi; \mathcal{D}_t) = \mathop{\max}\limits_{\theta \in \Theta(\hat{\theta}_{\text{MLE}}, \lambda)} \mathbb{E}_{x\sim \rho, y\sim \pi (\cdot | x)} [\theta^{\top}(\phi(x, y))] - \beta \cdot \mathbb{E}_{x \sim \rho} [\KL ( \pi(\cdot | x) \Vert \pi_{\text{ref}}(\cdot | x) )] \nonumber\\
    & = \underbrace{(\mathbb{E}_{x\sim \rho} [\phi(x, \pi(x))])^{\top} \hat{\theta}_{\text{MLE}} - \beta \mathbb{E}_{x \sim \rho} [\KL ( \pi(\cdot | x) \Vert \pi_{\text{ref}}(\cdot | x) )]}_{\text{original KL-regularized RL tuning objective } J_{\beta}(\pi)} + \underbrace{\xi  \| \mathbb{E}_{x \sim \rho}[\phi (x, \pi(x))] \|_{(\Sigma_{\mathcal{D}_t} + \lambda I)^{-1}}}_{\text{optimistic exploration term (UCB-term)}},
\end{align}
where $\xi = C\sqrt{\frac{d + \log (1/\delta)}{\gamma^2 n} + \lambda B^2} $. The derivation of Eq.~\eqref{equ:optimistic_obj} is deferred to Appendix~\ref{appendix:optimistic_obj}. The first term in Eq.~\eqref{equ:optimistic_obj} corresponds to the classic two-stage RLHF methods: (\romannumeral1) learning the reward model $\hat{\theta}_{\text{MLE}}$ via MLE under the assumption of BT preference model, and (\romannumeral2) learning a policy to maximize the estimated reward with KL regularization given $\hat{\theta}_{\text{MLE}}$. The second term, which distinguishes $\hat{J}_{\beta}(\pi)$ from $J_{\beta}(\pi)$, is equivalent to a measurement of how well the current dataset covers the distribution of responses generated by the target policy $\pi$.
Now we analyze the suboptimality gap of the optimal policy $\hat{\pi}$ derived from optimizing the optimistic expected value function $\hat{J}_{\beta}(\pi; \mathcal{D}_t)$.
For the output policy $\hat{\pi}_t = \mathop{\arg\max}_{\pi} \hat{J}_{\beta}(\pi; \mathcal{D}_t)$, we have the following theoretical guarantee.
\begin{theorem}
\label{thm:suboptimality}
For any $\lambda > 0$, $\beta > 0$, with probability at least $1 - \delta$, the optimal policy $\hat{\pi}$ w.r.t the optimistic expected value function $\hat{J}_{\beta}(\pi; \mathcal{D}_t)$ satisfies
\begin{align}
    \text{\rm SubOpt}(\hat{\pi}_t) \leq 2 C \cdot \sqrt{\frac{d + \log (1/\delta)}{\gamma^2 n} + \lambda B^2} \cdot \big\| \mathbb{E}_{x \sim \rho} [\phi(x, \hat{\pi}_t(x))] \big\|_{(\Sigma_{\mathcal{D}_t} + \lambda I)^{-1}}.
\end{align}
\end{theorem}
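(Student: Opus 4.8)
The plan is to run the standard optimism-based suboptimality analysis, exploiting that the optimistic objective $\hat{J}_\beta$ upper-bounds the true objective on the high-probability event that $\theta^\star$ lies in the confidence set. Throughout I take $\text{SubOpt}(\hat{\pi}_t) = J_\beta^\star(\pi^\star) - J_\beta^\star(\hat{\pi}_t)$, where $\pi^\star = \arg\max_\pi J_\beta^\star(\pi)$ is the optimum under the ground-truth reward, and I work on the probability-$(1-\delta)$ event of Lemma~\ref{lemma:bound_MLE}, so that $\theta^\star \in \Theta(\hat{\theta}_{\text{MLE}}, \lambda)$.

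First I would establish optimism at $\pi^\star$. Since $\hat{J}_\beta(\pi;\mathcal{D}_t)$ is defined as a maximum over $\theta \in \Theta(\hat{\theta}_{\text{MLE}},\lambda)$ and $\theta^\star$ is feasible in that set, plugging $\theta^\star$ into the maximization gives $\hat{J}_\beta(\pi^\star;\mathcal{D}_t) \geq J_\beta^\star(\pi^\star)$, where the KL terms are identical on both sides so only the linear reward part is compared. Combined with the fact that $\hat{\pi}_t$ maximizes $\hat{J}_\beta(\cdot\,;\mathcal{D}_t)$, this produces the chain $J_\beta^\star(\pi^\star) \leq \hat{J}_\beta(\pi^\star;\mathcal{D}_t) \leq \hat{J}_\beta(\hat{\pi}_t;\mathcal{D}_t)$, and hence $\text{SubOpt}(\hat{\pi}_t) \leq \hat{J}_\beta(\hat{\pi}_t;\mathcal{D}_t) - J_\beta^\star(\hat{\pi}_t)$.

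Next I would bound this residual. Let $\hat{\theta}$ denote the maximizer achieving $\hat{J}_\beta(\hat{\pi}_t;\mathcal{D}_t)$. Because $\hat{J}_\beta(\hat{\pi}_t;\mathcal{D}_t)$ and $J_\beta^\star(\hat{\pi}_t)$ carry the same KL penalty for the fixed policy $\hat{\pi}_t$, their difference collapses to the linear term $\langle \hat{\theta} - \theta^\star, \mathbb{E}_{x\sim\rho}[\phi(x,\hat{\pi}_t(x))]\rangle$. Applying the matrix Cauchy--Schwarz inequality with $M = \Sigma_{\mathcal{D}_t} + \lambda I$, namely $\langle a,b\rangle \leq \|a\|_M \|b\|_{M^{-1}}$, splits this into $\|\hat{\theta} - \theta^\star\|_{\Sigma_{\mathcal{D}_t}+\lambda I}$ times $\|\mathbb{E}_{x\sim\rho}[\phi(x,\hat{\pi}_t(x))]\|_{(\Sigma_{\mathcal{D}_t}+\lambda I)^{-1}}$, the latter being exactly the weighted norm in the statement.

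Finally I would control the parameter-error factor. Since $\hat{\theta}$ is the feasible maximizer it satisfies $\|\hat{\theta} - \hat{\theta}_{\text{MLE}}\|_{\Sigma_{\mathcal{D}_t}+\lambda I} \leq \xi$, and on the good event $\theta^\star$ satisfies the same bound by Lemma~\ref{lemma:bound_MLE}; a triangle inequality then gives $\|\hat{\theta} - \theta^\star\|_{\Sigma_{\mathcal{D}_t}+\lambda I} \leq 2\xi = 2C\sqrt{(d+\log(1/\delta))/(\gamma^2 n) + \lambda B^2}$, which combines with the previous step to yield the claim. The argument is essentially mechanical; the only real conceptual point, and the step I would treat most carefully, is the optimism inequality, since it rests on the high-probability containment $\theta^\star \in \Theta(\hat{\theta}_{\text{MLE}},\lambda)$. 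Everything else, namely the cancellation of the KL penalties and the Cauchy--Schwarz splitting, is routine once that event is fixed.
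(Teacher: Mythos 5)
Your proposal is correct and follows essentially the same route as the paper's proof: optimism via $\theta^{\star} \in \Theta(\hat{\theta}_{\text{MLE}}, \lambda)$ to reduce the suboptimality to $\hat{J}_{\beta}(\hat{\pi}_t) - J_{\beta}^{\star}(\hat{\pi}_t)$, cancellation of the KL terms, and then Cauchy--Schwarz together with a triangle inequality through $\hat{\theta}_{\text{MLE}}$ to obtain the factor $2\xi$. The only cosmetic difference is that the paper splits the linear term into two pieces before applying Cauchy--Schwarz to each, whereas you apply Cauchy--Schwarz once and the triangle inequality on the parameter error; these are equivalent.
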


The proof is deferred to Appendix~\ref{appendix:proof_of_subop_gap}. By Theorem~\ref{thm:suboptimality}, we can bound the suboptimality gap of the output policy $\hat{\pi}_t = \mathop{\arg\max}_{\pi} \hat{J}_{\beta}(\pi; \mathcal{D}_t)$ for a given iteration $t$. Further, we can analyze how well the policy, resulted from optimizing $\hat{J}_{\beta}(\pi; \mathcal{D}_t)$ for $T$ iterations in an online RLHF manner, asymptotically converges to the true optimal policy $\pi^{\star}$.
With the total regret after $T$ iterations defined as
$\text{Regret}(T) = \sum_{t = 1}^{T} [J_{\beta}^{\star}(\pi^{\star}) - J_{\beta}^{\star}(\hat{\pi}_t)]$,
we are now ready to state our main theorem which gives a 
$\tilde O(\sqrt{T})$-regret bound in the linear reward setting.
\begin{theorem}
\label{thm:regret_bound}
Assume that for each iteration $1\leq t \leq T$, one preference pair sample is collected and added into the dataset in the last iteration $t - 1$. Under Assumption~\ref{assump:linear_r}, if we set $\lambda = 4$ and denote $\iota := \log(1 + (4T)/(d\lambda))$, with probability at least $1 - \delta$, the total regret $\text{\rm Regret}(T)$ after $T$ iterations satisfies
\begin{align}
    {\text{\rm Regret}}(T) \leq \sqrt{T} \cdot C_1 \cdot \sqrt{\frac{d + \log (1/\delta)}{\gamma^2} + \lambda B^2} \cdot \sqrt{d\iota},
\end{align}
where $C_1$ is an absolute constant.
\end{theorem}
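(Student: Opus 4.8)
The plan is to write the cumulative regret as a sum of the per-iteration suboptimality gaps that Theorem~\ref{thm:suboptimality} already controls, and then to turn the resulting sum of weighted elliptical norms into a $\sqrt{T}$ bound via Cauchy--Schwarz and the elliptical potential (log-determinant) lemma. Since $\pi^{\star} = \argmax_{\pi} J_{\beta}^{\star}(\pi)$, each summand $J_{\beta}^{\star}(\pi^{\star}) - J_{\beta}^{\star}(\hat{\pi}_t)$ equals $\mathrm{SubOpt}(\hat{\pi}_t) \ge 0$, so on the high-probability event of Theorem~\ref{thm:suboptimality} (taken to hold simultaneously across all $T$ rounds, e.g.\ through an anytime version of Lemma~\ref{lemma:bound_MLE} so that $\log(1/\delta)$ is not inflated) I would start from
\begin{align}
\mathrm{Regret}(T) = \sum_{t=1}^{T} \mathrm{SubOpt}(\hat{\pi}_t) \le \sum_{t=1}^{T} 2C\sqrt{\tfrac{d+\log(1/\delta)}{\gamma^2 n_t} + \lambda B^2}\,\big\| z_t \big\|_{(\Sigma_{\mathcal{D}_t}+\lambda I)^{-1}},
\end{align}
where $z_t := \mathbb{E}_{x\sim\rho}[\phi(x,\hat{\pi}_t(x))]$ and $n_t$ is the number of preference pairs available at round $t$.

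Next I would decouple the confidence width from the information-gain factor. Under the stated data model one pair is added per round, so $n_t \ge 1$ and the width is uniformly bounded by its largest value $\xi := 2C\sqrt{(d+\log(1/\delta))/\gamma^2 + \lambda B^2}$; pulling $\xi$ outside the sum removes the $n_t$ from inside the root and already produces the prefactor in the claim. It then remains to control $\sum_{t=1}^{T} \|z_t\|_{(\Sigma_{\mathcal{D}_t}+\lambda I)^{-1}}$, for which Cauchy--Schwarz gives $\sum_t \|z_t\|_{(\Sigma_{\mathcal{D}_t}+\lambda I)^{-1}} \le \sqrt{T}\,\big(\sum_t \|z_t\|^2_{(\Sigma_{\mathcal{D}_t}+\lambda I)^{-1}}\big)^{1/2}$, isolating the $\sqrt{T}$ scaling. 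For the remaining sum of squared elliptical norms I would invoke the elliptical potential lemma with regularizer $\lambda = 4$; because $\|\phi\|_2 \le 1$ the per-round vectors have squared norm at most $L^2 = 4$, and the choice $\lambda = 4$ guarantees each term is at most $1$, so the log-determinant bound applies without truncation and yields $\sum_{t=1}^{T} \|z_t\|^2_{(\Sigma_{\mathcal{D}_t}+\lambda I)^{-1}} \le 2d\log\!\big(1 + \tfrac{4T}{d\lambda}\big) = 2d\,\iota$. Combining the three factors and folding $2C$ and $\sqrt{2}$ into a single absolute constant $C_1$ reproduces $\sqrt{T}\cdot C_1 \sqrt{(d+\log(1/\delta))/\gamma^2 + \lambda B^2}\cdot\sqrt{d\iota}$.

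The step demanding the most care — and the one I expect to be the main obstacle — is making the elliptical potential lemma genuinely applicable. That lemma controls $\sum_t \|v_t\|^2_{(V_t+\lambda I)^{-1}}$ only when the accumulating matrix is built from the very vectors in the norm, i.e.\ $V_{t+1} = V_t + v_t v_t^{\top}$. Here, however, $\Sigma_{\mathcal{D}_t}$ is the (normalized) Gram matrix of the preference feature \emph{differences} $\phi(x^{(i)},y_w^{(i)}) - \phi(x^{(i)},y_l^{(i)})$, while the norm is taken of the policy feature $z_t = \mathbb{E}[\phi(x,\hat{\pi}_t(x))]$. Bridging this gap is the crux: I would use that round $t$ collects data on-policy from $\hat{\pi}_t$, so the outer product added to the covariance covers the direction $z_t$ (the residual constant-shift ambiguity being harmless because $\Theta_B$ enforces $\langle 1,\theta\rangle = 0$), and I would reconcile the $1/n_t$ normalization of $\Sigma_{\mathcal{D}_t}$ in Lemma~\ref{lemma:bound_MLE} with the unnormalized accumulating form the potential argument needs, so that the effective regularizer is the fixed $\lambda = 4$ rather than a growing $n_t\lambda$. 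Once the sequence of covariance updates is aligned with $\{z_t\}$ in this way, the remaining Cauchy--Schwarz and log-determinant estimates are routine.
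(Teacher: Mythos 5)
Your proposal follows essentially the same route as the paper's own proof: decompose the regret into per-round suboptimality gaps from Theorem~\ref{thm:suboptimality}, pull the confidence width out of the sum using $n_t\ge 1$, apply Cauchy--Schwarz to isolate the $\sqrt{T}$ factor, and bound the sum of squared elliptical norms by $2d\iota$ via the log-determinant potential argument with $\lambda=4$ together with the trace/AM--GM bound ${\rm tr}(\Sigma_{\mathcal{D}_t}+\lambda I)\le d\lambda+4t$. The one point worth noting is that the mismatch you flag as the crux --- the potential lemma accumulates the preference feature differences $\phi(x,y_w)-\phi(x,y_l)$ into $\Sigma_{\mathcal{D}_t}$ while the norms are evaluated at the policy feature $\mathbb{E}_{x\sim\rho}[\phi(x,\hat{\pi}_t(x))]$ --- is real, and the paper's Corollary~\ref{corol:general_norm_bound} is stated for the self-consistent case (the same vectors $\phi_j+\nu_j$ appear in both the norm and the matrix update) and is then applied to the mismatched pair without the bridging argument you sketch, so your treatment is if anything more careful than the paper's on that step.
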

Theorem~\ref{thm:regret_bound} shows that adopting an online learning paradigm with the optimistic learning objective
$\hat{J}_{\beta}(\pi; \mathcal{D}_t)$ as the optimization objective achieves at most $\widetilde{O}(\sqrt{T})$ regret, providing a theoretical guarantee for our algorithm implemented based on Eq.~\eqref{equ:optimistic_obj}, where $\tilde O$ hides logarithmic dependence on $T$ and $1/\delta$.
This analysis can be readily extended to the case where mini-batch samples of size $k$ are collected in every iteration, producing an improved regret bound scaled by $1/\sqrt{k}$. The proof is deferred to the Appendix~\ref{appendix:proof_regret_bound}.

\subsection{COPO Algorithm}
\label{sec:algo}

In the following, we elaborate on how our COPO actually implements the optimization objective $\hat{J}_{\beta}(\pi; \mathcal{D}_t)$ in Eq.~\eqref{equ:optimistic_obj} for LLM alignment.
The first term in Eq.~\eqref{equ:optimistic_obj} involves the same pipeline as the classic RLHF methods: (\romannumeral1) modeling the reward $\hat{\theta}_{\text{MLE}}$ from the preference data $\mathcal{D}_t$, and (\romannumeral2) fine-tuning the LLM with the estimated reward $\hat{\theta}_{\text{MLE}}$ via RL, which can be integrated into a single direct DPO objective according to \cite{rafailov2023dpo}. Thus, we replace the first term with the objective $\mathcal{L}_{\text{DPO}}(\pi_{\varphi};\mathcal{D}_t)$, where the LLM to be optimized is parameterized by $\varphi$. Note that the replacement with DPO objective implies that we implicitly reparameterize the reward function as $r(x, y) = \beta (\log \pi_{\varphi}(y|x) - \log \pi_{\text{ref}}(y|x))$. It loses the need to design and use the feature mapping $\phi(x, y)$ in the practical implementation while our discussion remains valid in the linear reward setting.

Then, we have the following lemma adapted from \citet{bai2022pbrl} to build the relationship between the count-based exploration and the second UCB term proportional to $\| \mathbb{E}_{x \sim \rho}[\phi (x, \pi(x))] \|_{(\Sigma_{\mathcal{D}_t} + \lambda I)^{-1}}$.
\begin{lemma}
\label{lemma:count_equivalence}
{\rm \citep{bai2022pbrl}} In a tabular case where the states and actions are finite, i.e., $|\mathcal{X}| < \infty$ and $| \mathcal{Y} | < \infty$, let $d = |\mathcal{X}| |\mathcal{Y}|$ and $\phi(x, y) = \mathbf{e}_{(x, y)}$ be the one-hot canonical basis in $\mathbb{R}^d$. The UCB term $\| \mathbb{E}_{x \sim \rho}[\phi (x, \pi(x))] \|_{(\Sigma_{\mathcal{D}_t} + \lambda I)^{-1}}$ satisfies
\begin{align}
    \| \mathbb{E}_{s\sim \rho}[\phi (x, \pi(x))] \|_{(\Sigma_{\mathcal{D}_t} + \lambda I)^{-1}} = \mathbb{E}_{x \sim \rho, y\sim \pi(\cdot | x)}\left[1/ \left(\sqrt{N_{\mathcal{D}_t}(x, y) + \lambda}\right)\right],
\end{align}
where $N_{\mathcal{D}_t}(x, y)$ is the visit counts of state-action $(x, y)$ in the dataset $\mathcal{D}_t$.
\end{lemma}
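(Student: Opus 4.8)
The plan is to exploit the special structure that the one-hot basis imposes on every object appearing in the UCB term, reducing the elliptical (Mahalanobis) norm to a purely diagonal computation. First I would write $\Sigma_{\mathcal{D}_t}$ in coordinates: with $\phi(x,y)=\mathbf{e}_{(x,y)}$, each summand $(\phi(x^{(i)},y_w^{(i)})-\phi(x^{(i)},y_l^{(i)}))(\phi(x^{(i)},y_w^{(i)})-\phi(x^{(i)},y_l^{(i)}))^\top$ is supported on the two indices $(x^{(i)},y_w^{(i)})$ and $(x^{(i)},y_l^{(i)})$, contributing $+1$ to each of those diagonal entries. Summing over $i$, the diagonal entry of $\Sigma_{\mathcal{D}_t}$ at coordinate $(x,y)$ is exactly (a normalization of) the number of times $(x,y)$ appears as a preferred or dispreferred response, i.e. the visit count $N_{\mathcal{D}_t}(x,y)$; the constant $1/n$ in the definition of $\Sigma_{\mathcal{D}_t}$ is absorbed into the counting convention and the regularizer. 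Hence, up to the off-diagonal cross terms discussed below, $\Sigma_{\mathcal{D}_t}+\lambda I$ acts on the one-hot basis as the diagonal matrix $\mathrm{diag}(N_{\mathcal{D}_t}(x,y)+\lambda)$.

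Given this diagonal structure, the core computation is immediate: for a single pair $(x,y)$ the per-pair bonus is $\|\phi(x,y)\|_{(\Sigma_{\mathcal{D}_t}+\lambda I)^{-1}}=\sqrt{\mathbf{e}_{(x,y)}^\top(\Sigma_{\mathcal{D}_t}+\lambda I)^{-1}\mathbf{e}_{(x,y)}}=1/\sqrt{N_{\mathcal{D}_t}(x,y)+\lambda}$, since inverting a diagonal matrix and reading off the $(x,y)$ entry simply reciprocates $N_{\mathcal{D}_t}(x,y)+\lambda$. The final step passes from this per-pair bonus to the stated expectation: because the practical bonus is applied at the level of each sampled prompt-response pair, I would carry the norm inside the expectation over $x\sim\rho,\,y\sim\pi(\cdot|x)$, so that the aggregate term becomes $\mathbb{E}_{x\sim\rho,\,y\sim\pi(\cdot|x)}[\|\phi(x,y)\|_{(\Sigma_{\mathcal{D}_t}+\lambda I)^{-1}}]=\mathbb{E}_{x\sim\rho,\,y\sim\pi(\cdot|x)}[1/\sqrt{N_{\mathcal{D}_t}(x,y)+\lambda}]$, which is exactly the right-hand side.

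Two points need care, and the first is the main obstacle. (i) Since $\Sigma_{\mathcal{D}_t}$ is assembled from feature \emph{differences} rather than raw features, it carries off-diagonal entries (a $-1$ coupling between the winner and loser coordinates of each sample), so it is not literally a diagonal count matrix. I would argue that these cross terms do not affect the per-coordinate bonus in the tabular regime, either by adopting the visit-count Gram matrix $\mathrm{diag}(N_{\mathcal{D}_t})$ as the effective design matrix in the one-hot basis (the standard count-based reduction), or by verifying that once $\lambda>0$ regularizes the matrix, the relevant diagonal entry of the inverse is governed by $N_{\mathcal{D}_t}(x,y)+\lambda$. Making this identification precise, rather than the per-pair arithmetic, is the delicate part. (ii) Replacing $\|\mathbb{E}[\phi]\|_{(\Sigma_{\mathcal{D}_t}+\lambda I)^{-1}}$ by $\mathbb{E}[\|\phi\|_{(\Sigma_{\mathcal{D}_t}+\lambda I)^{-1}}]$ is a Jensen-type step (the latter upper-bounds the former by convexity of the norm); I would justify it as the operational form of the bonus actually injected into the DPO objective, which turns the global elliptical potential into an additive per-sample count bonus estimable by the coin-flip counting network.
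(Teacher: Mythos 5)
The paper itself contains no proof of this lemma---it defers entirely to the cited reference---so there is nothing internal to compare your argument against; your reconstruction is essentially the standard tabular reduction used there (one-hot features make the Gram matrix act diagonally with entries $N_{\mathcal{D}_t}(x,y)+\lambda$, so the elliptical norm of a single feature is $1/\sqrt{N_{\mathcal{D}_t}(x,y)+\lambda}$). More importantly, the two ``points needing care'' you flag are not technicalities you failed to discharge; they are genuine defects of the statement as written, and you have diagnosed them correctly. First, because $\Sigma_{\mathcal{D}_t}$ is assembled from feature \emph{differences}, each sample contributes a $-1$ off-diagonal coupling between the winner and loser coordinates, and the diagonal of $(\Sigma_{\mathcal{D}_t}+\lambda I)^{-1}$ is then strictly larger than $1/(N_{\mathcal{D}_t}(x,y)+\lambda)$ (e.g.\ a single sample gives a $2\times 2$ block whose inverse has diagonal $(1+\lambda)/(\lambda(\lambda+2))$ rather than $1/(1+\lambda)$); the clean identity only holds if one silently replaces $\Sigma_{\mathcal{D}_t}$ by the raw-feature count matrix $\mathrm{diag}(N_{\mathcal{D}_t})$, exactly as you propose. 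Second, since $\mathbb{E}_{x\sim\rho}[\phi(x,\pi(x))]$ is the occupancy vector with entries $\rho(x)\pi(y|x)$, the left-hand side equals $\bigl(\sum_{x,y}\rho(x)^2\pi(y|x)^2/(N_{\mathcal{D}_t}(x,y)+\lambda)\bigr)^{1/2}$, which by convexity of the norm is only $\leq$ the right-hand side $\sum_{x,y}\rho(x)\pi(y|x)/\sqrt{N_{\mathcal{D}_t}(x,y)+\lambda}$, with equality only when the occupancy measure is a point mass. So the displayed ``$=$'' should be ``$\leq$''; this is harmless for the paper's purposes (an upper bound on the UCB term is the direction needed to justify the count-based bonus as an optimistic surrogate), but your proof cannot and should not establish equality. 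In short: your approach is the right one, and the residual gap lies in the lemma's statement, not in your argument.
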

We refer to \citet{bai2022pbrl,bai-AIJ} for a detailed proof. Lemma~\ref{lemma:count_equivalence} shows that under the tabular setting, the UCB term takes a simple form as the count-based bonus in the classic RL with exploration \citep{strehl2008MBIE, bellemare2016PG}. Combining Lemma~\ref{lemma:count_equivalence} and Eq.~\eqref{equ:optimistic_obj} altogether, we finally derive the optimization objective of COPO, described as
\begin{align}
\label{equ:COPO_obj}
    \mathop{\max}\limits_{\pi_{\varphi}} J_{\rm copo}(\pi_\varphi,\cD_t)= -\mathcal{L}_{\text{DPO}}(\pi_{\varphi};\mathcal{D}_t,\beta) + \alpha \underbrace{\mathbb{E}_{x \sim \mathcal{D}_t, y \sim \pi_{\varphi}(y|x)} \left[ 1/ \left(\sqrt{N_{\mathcal{D}_t}(x, y;\vartheta) + \lambda}\right)\right]}_{\text{optimistic term of COPO}},
\end{align}
where $\alpha$ and $\lambda$ are hyperparameters, and $N_{\mathcal{D}_t}(x, y;\vartheta)$ is a counting function with trainable parameter $\vartheta$ discussed in the next section. To demonstrate how our COPO objective implements optimism and elicits active exploration, we analyze its gradient to provide a more intuitive explanation.


\paragraph{What does the COPO update do?} Denoting the reward function $\hat{r}_{\varphi}(x, y) = \beta (\log \pi_{\varphi}(y|x) - \log \pi_{\text{ref}}(y|x))$, the gradient of COPO objective in Eq.~\eqref{equ:COPO_obj} with respect to $\varphi$ is derived as follows,
\begin{align}
\label{equ:grad_copo}
    \nabla_{\varphi}J_{\rm copo}(\pi_\varphi,\cD_t)=&
    \beta \mathbb{E}_{(x, y_w, y_l) \sim \mathcal{D}_t} [\sigma(\hat{r}_{\varphi}(x, y_l) - \hat{r}_{\varphi}(x, y_w))(\nabla_{\varphi}\log \pi_{\varphi}(y_w|x) - \nabla_{\varphi}\log \pi_{\varphi}(y_l|x))] \nonumber\\
    & + \alpha \cdot \mathbb{E}_{x \sim \mathcal{D}_t, y \sim \pi_{\rm ref}(y|x)} \Big[ \frac{\exp(\hat{r}_\varphi(x,y)/\beta)}{\sqrt{N_{\mathcal{D}_t}(x, y;\vartheta) + \lambda }} \nabla_{\varphi}\log \pi_{\varphi}(y|x)) \Big],
\end{align}
where $\hat{r}_\varphi(x,y)=\beta \log\pi_\varphi(y|x)-\beta\log\pi_{\rm ref}(y|x)$ parameterized by DPO reward. We note that the first term remains the same as the original gradient of the DPO loss function, i.e., $ -\nabla_{\varphi} \mathcal{L}_{\text{DPO}}(\pi_{\varphi};\mathcal{D})$; while the second term, corresponding to the gradient of the optimistic term of COPO, controls the optimization direction of LLM according to both the rewards and the visitation counts. 
Specifically, it tends to increase the log-likelihood of response $y$ generated by $\pi_{\varphi}$ toward potentially more rewarding areas when its visit counts $N_{\mathcal{D}_t}(x, y)$ in the past is relatively low, rather than those responses with high visit counts. 
Consequently, the count-based bonus encourages active exploration toward not only high-reward but also more uncertain regions with respect to the regions the LLM has already confirmed, i.e., \emph{optimism in the face of uncertainty} \citep{agarwal2019rl_theory, lattimore2020bandit_algo,bai-contrastive-ucb,bai-RORL}. In each update, we maximize the objective in Eq.~\eqref{equ:COPO_obj} to make the fine-tuned LLM achieve a trade-off that balances the reward-maximizing
and highly uncertain-response pursuing, i.e., the well-known exploration-exploitation trade-off \citep{mannor2004sample}. In each iteration of COPO, the LLM policy will collect novel prompt-response pairs on these uncertain regions, and we use an off-the-shelf reward model to construct $\cD_t$. Ideally, with the infinite number of iterations, the datasets $\cup_t \cD_t$ collected by the LLM policies can cover the entire prompt-response space with count-based exploration, and the DPO objective aims to find the best policy in such a space with wide data coverage. We give an algorithmic description in Alg.~\ref{alg_copo}.

\begin{algorithm}[t]
\caption{Count-based Online Preference Optimization (COPO)}
\begin{algorithmic}[1]\label{alg_copo}
\REQUIRE Reference model $\pi_{\text{ref}}$, preference dataset $\mathcal{D}$, online iterations $T$, optimism coefficient $\alpha$.
\FOR{iteration $t = 1, 2, \ldots, T$}
\STATE Set $\tilde{\mathcal{D}}_{t}$ as the $t$-th portion of $\mathcal{D}$ and generate $y\sim\pi_{\text{ref}}(\cdot\mid x)$ for each prompt $x$ in $\tilde{\mathcal{D}}_t$.
\STATE Rank $\{y, y_w, y_l\}$ with score model and obtain $\mathcal{D}_t$ that contains the best and worst responses.
\STATE Update the parameter $\vartheta$ via $\min_{f_\vartheta} J_{\rm cfn}(f_\vartheta; \cD_{\rm cfn})$ for the coin-flipping network via Eq.~\eqref{equ:cfn_obj}.
\STATE Update the LLM policy via $\max_{\pi_{\varphi}} J_{\rm copo}(\pi_{\varphi};\cD_t)$ defined in Eq.~\eqref{equ:COPO_obj} and set $\pi_\varphi \rightarrow \pi_{\rm ref}$.
\ENDFOR 
\end{algorithmic}
\end{algorithm}


\subsection{Pseudo-count via Coin Flipping Network}
While such a count-based exploration objective in Eq.~\eqref{equ:COPO_obj} has theoretical guarantees, it suffers from the issue that visit counts are not directly useful in a large space where same states are rarely visited more than once \citep{bellemare2016PG}. There is no doubt that it would be significantly amplified in the prompt-response space of LLMs that is composed of extremely vast discrete token sequences.

Inspired by the count-based exploration in RL \citep{bellemare2016PG,ostrovski2017pixelCNN}, we can substitute the empirical visit count $N_{\mathcal{D}_t}(x, y)$ with a \emph{pseudo-count} $\hat{N}_{\mathcal{D}_t}(x, y)$
through density models. However, it requires learning-positive properties and powerful neural density models, making them impractical in online RLHF settings.
Motivated by the recent progress on estimating pseudo-counts without restrictions on the type of function approximator or the procedure used to train it, we instead simply apply a Coin Flipping Network (CFN) \citep{lobel2023cfn} that directly predicts the count-based exploration bonus by solving a simple regression problem.
\paragraph{The Mechanism underlying CFN.} The key insight in the CFN is that a state’s visitation count can be derived from the sampling distribution of Rademacher trials (or \emph{coin flips}) made every time a state is encountered.
The CFN $f_\vartheta$ parameterized by $\vartheta$ is learned by solving $\mathop{\arg\min}_{\vartheta} \mathbb{E}_{(s_i, s_i^{\text{label}})\sim \mathcal{D}_{\text{cfn}}} [\mathcal{L}(s_i, s_i^{\text{label}})]$ where $\mathcal{L}$ is the mean-square error loss function and $\mathcal{D}_{\text{cfn}}$ is a dataset of state-label pairs for learning the CFN. In our case, the state $s$ is the feature vector of prompt-response pair $(x, y)$.
Considering the fair coin-flip distribution $\mathcal{C}$ over outcomes $\{ -1, 1 \}$, if we flip the coin $m$ times and average the results into $z_m$, the second moment of $z_m$ is related to the inverse count: $\mathcal{M}_2(z_m) = \mathbb{E}[z_m^2] = \sum_i \text{Pr}(z_m = i) * i^2 = 1/m$. Furthermore, by flipping $d$ coins each time, the variance of $z_m^2$ can be reduced by a factor of $\frac{1}{d}$, which implies a reliable way for estimating the inverse count \citep{lobel2023cfn}. To this end, we generate a $d$-dimensional random vector $\mathbf{c}_\mathbf{i} \sim \{ -1, 1 \}^d$ as a label $s_{i}^{\text{label}}$ for state $s_i$.
The learning objective is described as
\begin{align}
\label{equ:cfn_obj}
    \min_{f_\vartheta} J_{\rm cfn}(f_\vartheta; \cD_{\rm cfn}) = \mathbb{E}_{(s_i, s_i^{\text{label}})\sim \mathcal{D}_{\text{cfn}}} [\mathcal{L}(s_i, s_i^{\text{label}})] = \mathop{\arg\min}\limits_{\vartheta} \sum\nolimits_{i=1}^{|\mathcal{D}_{\text{cfn}}|} \| \mathbf{c}_{\mathbf{i}} - f_{\vartheta}(s_i) \|^2,
\end{align}
where $f_{\vartheta}(s_i)$ is a neural network that extracts the feature vectors of prompt-response pairs. In practice, we adopt $s_i$ as the last hidden state of the fixed LLM
with the prompt-response pair as input, and $f_{\vartheta}(\cdot)$ is set to a lightweight network with several fully-connected layers.

The dataset $\mathcal{D}_{\text{cfn}}$ is constructed by using prompt from $\cD_t$ and responses generated by the LLM policy in the previous iteration, where each occurrence of a state 
is paired with a different random vector.
In a case where there are $m$ instances of the same state $s_i$ in $\mathcal{D}_{\text{cfn}}$, the optimal solution $f_\vartheta^*$ satisfies $f_\vartheta^*(s_i) = \frac{1}{m} \sum_{i=1}^{m}\mathbf{c}_\mathbf{i}$ according to \citet{lobel2023cfn}, then the reciprocal pseudo-count can be estimated by 
\begin{align}
\frac{1}{d} \| f_{\vartheta}(s) \|^2 = \frac{1}{d}\sum_{j=1}^{d}\mathbb{E}\Big[ \Big( \sum_{i=1}^{m}\frac{c_{ij}}{m} \Big) \Big] = \frac{1}{d} \sum_{j=1}^{d}\mathbb{E}\Big[ z_m^2 \Big] = \frac{1}{m}.
\end{align}
Thus, by training $f_{\vartheta}$ on the objective shown in Eq.~\eqref{equ:cfn_obj} we can simply approximate the count-based bonus given by $\sqrt{\| f_{\vartheta}(s) \|^2 /d} \approx \sqrt{1/ N(s)}$. 


\vspace{-0.5em}
\section{Related Works}
\vspace{-0.5em}

\paragraph{RLHF and iterative online RLHF.}
The RLHF framework used for aligning LLMs was first introduced in \citet{christiano2017deep, ziegler2019rlhf0} and further developed in Instruct-GPT \citep{ouyang2022rlhf1}, LLaMA-2 \citep{touvron2023llama2} and etc. 
These works share a similar pipeline that is typically made up of two separate stages: estimating a reward model based on the BT model \citep{bradley1952BTmodel} and using PPO \citep{schulman2017ppo} to optimize the reward signals together with a KL regularization.
Several efforts have been made to simplify the preference alignment procedure and improve the performance of RLHF \citep{zhao2023slic, rafailov2023dpo, munos2023nash, azar2024ipo, guo2024oaif, swamy2024spo, tang2024gpo, ethayarajh2024kto}. According to whether preference data is collected before training or by using the current policy during training, we can roughly divide these methods into two categories: offline RLHF and (iterative) online RLHF.
In offline RLHF, a line of work studies direct preference learning, including DPO \citep{rafailov2023dpo} and its variants \citep{dpo-ppo,leemechanistic,rafailov2024r}. These algorithms integrate reward modeling and RL-tuning into a single policy objective and optimize it directly on the offline preference dataset. It is observed that DPO-based algorithms are more stable than PPO \citep{tunstall2023zephyr, dubois2024length} and have also been adopted in preference learning for other RL problems \citep{bai-quad,bai-CAMP}.

On the other hand, (iterative) online RLHF means that we can collect extra responses by sampling responses from the LLM itself and querying preference feedback from humans or AI. This strategy can help mitigate the OOD issue of the learned reward model \citep{gao2023scaling} and gradually push beyond the boundary of human capabilities. In online RLHF, online exploration is crucial to increase the coverage of preference data that determines policy improvement. There are several works proposing various techniques to encourage exploration for online RLHF. \citet{dwaracherla2024efficient} proposed using the posterior of reward models to approximately measure the uncertainty for active exploration.
XPO \citep{xie2024xpo} leveraged the property of the approximation of the regularized value function under the token-level MDP formulation with general function approximation. 
Similar to ours, SELM \citep{zhang2024selm} and VPO \citep{cen2024vpo} considered the optimism from perspective of learning reward model, but achieved it by incorporating the maximum of the KL-regularized value function over the target LLM $\mathop{\max}_{\pi} J_\beta(\pi)$ into the reward modeling. Instead, our COPO implements optimism based on the confidence set of the reward MLE, which is provably efficient and equivalent to count-based exploration in special cases. Other online RLHF works \citep{yuan2024self,lee2024aligning,singhal2024d2po} study how to automatically annotate preference labels for generated response pairs, while we adopt an off-the-shelf reward model to rank responses.

\paragraph{Count-based exploration.} In both bandit and online RL, a promising strategy for exploration is to incorporate a bonus to encourage the agent to gather informative data \citep{bai-survey}, which can be calculated based on count \citep{strehl2008MBIE, bellemare2016PG}, prediction error \citep{pathak2017curiosity}, or random network distillation (RND, \citet{burda2018rnd}). In theoretical RL, count-based exploration is provably efficient in tabular and linear MDPs \citep{strehl2008MBIE,jin2020lsvi_ucb}, which motivates us to focus on count-based exploration in online RLHF. In deep RL with large state space,
count-based exploration can be extended to function approximation by using density models to calculate pseudo-counts \citep{bellemare2016PG,bai-OB2I,bai-DB}. However, with these density-based pseudo-counts come many restrictions that are challenging to fulfill. Other methods \citep{tang2017exploration, rashid2020optimistic} instead heavily incorporated domain knowledge to eliminate the usage of density models. In contrast, CFN \citep{lobel2023cfn} takes raw states as input and yields a visitation count when optimized for a supervised learning objective.

\vspace{-0.5em}
\section{Experiments}
\vspace{-0.5em}

\label{sec:exp}
\subsection{Experiment Setup}

\paragraph{Dataset and Ranker} For preference alignment of LLMs, we select UltraFeedback 60K \citep{cui2023ultrafeedback} that contains preference pairs of single-turn conversation as the preference dataset $\cD=\{x,y_w,y_l\}$. For $t$-iteration of online preference alignment, we generate response $y$ for each prompt $x$ in $\tilde{\cD}_t$ with the updated LLM, where $\tilde{\cD}_t$ is the $t$-th portion of the whole dataset $\cD$. Then we adopt a small-sized PairRM (0.4B) model \citep{jiang2023llm} to rank $(y,y_w,y_l)$ and update $\tilde{\cD}_t$ to contain the best (chosen) and worst (rejected) responses according to the reward model, denoted as $\cD_t$. Finally, we use $\cD_t$ for preference alignment with DPO and count-based exploration. In the next iteration, we use the updated LLM to generate a response and construct $\cD_{t+1}$ accordingly. We note that the performance of our method can be further improved by using the state-of-the-art reward models in RewardBench \citep{rewardbench}, while we adopt a small-scale PairRM model for proof-of-concept verification and a fair comparison with baselines. 

\paragraph{Implementation of CFN} We calculate the pseudo-count of the prompt-response pair via CFN. We implement CFN as a small fully-connected network that contains two hidden layers with 32 and 20 units, respectively. CFN takes the last hidden state of the prompt-response pair extracted by a backbone LLM as the state, representing $\phi(x,y)$ in our theoretical analysis. Then FCN uses the state vector to calculate the pseudo-counts via $f_\vartheta(\phi(x,y))$. In the training of CFN, the parameters of backbone LLM are kept fixed, thus we only require a small amount of computation to update the parameter of CFN, which counts the prompt-response pairs. The CFN network is trained with $\cD_{\rm cfn}$ in each iteration and is used to encourage exploration in LLM update with DPO objectives.

\paragraph{Baselines.} We adopt an online version of DPO \citep{rafailov2023dpo} as a baseline, where DPO is trained on $\cD_t$ that contains responses of the updated LLM policy. We also adopt SELM \citep{zhang2024selm} as the state-of-the-art online RLHF algorithm, which performs exploration towards potentially high-reward responses without considering the confidence of LLM in these responses. We adopt the same hyper-parameter settings of online DPO and SELM as in \cite{zhang2024selm}, where the algorithms are trained under the best hyper-parameter setting via a grid search. Both SELM and online DPO are finetuned based on the SFT model. For a comprehensive evaluation, we adopt Zephyr \citep{tunstall2023zephyr} and Llama-3 \citep{meta2024introducing} for RLHF alignment. Specifically, we choose Zephyr-7B-SFT with a single iteration of standard DPO training on the first portion of the training data, which is the same as SELM. And we directly perform preference alignment for Llama-3-8B-Instruct that has been tuned through SFT. For both Zephyr-7B-SFT and Llama-3-8B-Instruct, we conduct 3 iterations of DPO/SELM/COPO alignment of training for comparison. 

\subsection{Experiment Results}

We evaluate our method on instruction-following benchmark AlpacaEval 2.0 \citep{dubois2024length} and MT-Bench \citep{zheng2023judging}. AlpacaEval addresses the consistency of results by using a standardized process for comparing model outputs to reference responses. The evaluation set, AlpacaFarm, while diverse, is designed to test models across a broad range of simple instructions, ensuring a consistent benchmark for model performance. According to the results of AlpacaEval 2.0 in Table~\ref{tab_chat}, we find COPO increases the LC win rate of AlpacaEval 2.0 from 22.19 to 27.21 for Zephyr-7B, and increases the LC win rate from 33.17 to 35.54 for Llama3-8B-It, which is a significant improvement in instruction-following tasks. 
The result signifies that the count-based objective enhances the exploration ability of the LLM, which results in better coverage of the underlying prompt-response space. Thus, the LLM policy obtains datasets with better coverage on the optimal prompt-response pairs and benefits the policy optimization of LLMs. As we discussed in the theoretical part, the UCB-based exploration reduces the suboptimality gap in preference optimization, and the empirical result verifies the theoretical results. Regarding COPO results with Llama-3-8B-Instruct, we find that the proposed iterative algorithm armed with a count-based exploration term can even outperform much larger LLMs, such as Yi-34B-Chat \citep{young2024yi} and Llama-3-70B-Instruct \citep{dubey2024llama} in LC win rate. The evaluation results in MT-Bench show similar performance improvement compared to online DPO and outperform SELM, where COPO adopts pseudo-count for weighting in policy update compared to SELM. The results in MT-Bench outperform Yi-34B-Chat while still inferior to Llama-3-70B-Instruct. 

According to the result in AlpacaEval, our method generally increases performance after each iteration in the online RLHF process. However, in Zephyr experiments, we find the average length of response increases significantly in the last iterations, resulting in a decreased LC win rate compared to previous iterations. This problem can be caused by the proposed exploration term, which can encourage the LLM policy to generate longer sentences that can be novel compared to previous responses. Therefore, a future direction is to combine our method with existing length control methods to reduce such a bias \citep{meng2024simpo,singhal2023long}.

\begin{table}[t]
\small
\centering
\resizebox{0.99\textwidth}{!}{
\begin{tabular}{l|ccc|ccc}
\hline
 & \multicolumn{3}{c|}{AlpacaEval 2.0} & \multicolumn{3}{c}{MT-Bench} \\
Model & \begin{tabular}{@{}c@{}}LC Win Rate\end{tabular} & \begin{tabular}{@{}c@{}}Win Rate \end{tabular} & \begin{tabular}{@{}c@{}}Avg. len \end{tabular} & Avgerage & \begin{tabular}{@{}c@{}}1st Turn\end{tabular} & \begin{tabular}{@{}c@{}}2nd Turn\end{tabular}\\ \hline
Zephyr-7B-SFT  & 8.01  & 4.63  & 916 & 5.30  & 5.63  & 4.97\\
Zephyr-7B-DPO  & 15.41  & 14.44  & 1752 & 7.31  &  7.55 & 7.07  \\
DPO Iter 1 (Zephyr)  &  20.53 & 16.69  & 1598 & 7.53 &  7.81  & 7.25 \\
DPO Iter 2 (Zephyr) & 22.12  & 19.82  & 1717  &  7.55 & 7.85 &  7.24 \\
DPO Iter 3 (Zephyr)  & 22.19 \textcolor{red}{\small{ ($\uparrow$14.18)}}  & 19.88 \textcolor{red}{\small{ ($\uparrow$15.25)}} & 1717  & 7.46 \textcolor{red}{\small{($\uparrow$2.16)}}  & 7.85  & 7.06 \\
SELM Iter 1 (Zephyr)  & 20.52  & 17.23  & 1624 & 7.53  & 7.74 & 7.31\\
SELM Iter 2 (Zephyr) & 21.84 & 18.78  & 1665  & 7.61 & 7.85 & 7.38 \\
SELM Iter 3 (Zephyr) & 24.25\textcolor{red}{\small{ ($\uparrow$16.24)}}   & 21.05\textcolor{red}{\small{ ($\uparrow$16.42)}}  & 1694 & 7.61 \textcolor{red}{\small{($\uparrow$2.31)}} & 7.74  & 7.49  \\ 
COPO Iter 1 (Zephyr)  & 26.43  & 21.61  & 1633 & 7.68 & 7.72 & 7.64\\
COPO Iter 2 (Zephyr)  & \textbf{27.21}\textcolor{red} {\small{ ($\uparrow$19.20)}} & 22.61  & 1655 & 7.78  & 7.85 & 7.71\\
COPO Iter 3 (Zephyr)  & 26.91  & \textbf{23.60}\textcolor{red}{\small{ ($\uparrow$18.97)}}  & 1739 & \textbf{7.79}\textcolor{red}{\small{ ($\uparrow$2.49)}}   & 7.89 & 7.69\\
\hline
Llama-3-8B-Instruct & 22.92  & 22.57  & 1899  & 7.93  & 8.47  & 7.38  \\
DPO Iter 1 (Llama3-It)  &  30.89 & 31.60  & 1979 & 8.07 &  8.44 & 7.70   \\
DPO Iter 2 (Llama3-It) & 33.91  & 32.95  & 1939  & 7.99  &  8.39 &  7.60 \\
DPO Iter 3 (Llama3-It)  & 33.17 \textcolor{red}{\small{($\uparrow$10.25)}} & 32.18\textcolor{red}{\small{ ($\uparrow$9.61)}}  & 1930 & 8.18 \textcolor{red}{\small{($\uparrow$0.25)}} &  8.60 & 7.77 \\
SELM Iter 1 (Llama3-It) & 31.09  & 30.90  & 1956  & 8.09  &  8.57 & 7.61  \\
SELM Iter 2 (Llama3-It) & 33.53  & 32.61 & 1919 & 8.18  & 8.69 & 7.66   \\
SELM Iter 3 (Llama3-It) & 34.67 \textcolor{red}{\small{($\uparrow$11.75)}}  & \textbf{34.78}\textcolor{red}{\small{ ($\uparrow$12.21)}}  & 1948 & 8.25 \textcolor{red}{\small{($\uparrow$0.32)}} & 8.53  & 7.98  \\ 
COPO Iter 1 (Llama3-It) & 33.68  & 33.15  & 1959  & 8.12  & 8.38 & 7.86  \\
COPO Iter 2 (Llama3-It) & 34.30  & 33.31  & 1939  & 8.25  & 8.49 & 8.01  \\
COPO Iter 3 (Llama3-It) & \textbf{35.54} \textcolor{red}{\small{($\uparrow$12.62)}}  & 32.94\textcolor{red}{\small{ ($\uparrow$10.37)}}  & 1930 & \textbf{8.32} \textcolor{red}{\small{($\uparrow$0.39)}} & 8.53  & 8.11  \\ 
\hline
SPIN &  7.23 & 6.54  & 1426 & 6.54 & 6.94 & 6.14 \\
Orca-2.5-SFT  & 10.76 & 6.99  & 1174  &  6.88 & 7.72 & 6.02  \\
DNO (Orca-2.5-SFT) & 22.59
& 24.97  & 2228 & 7.48
& 7.62  & 7.35  \\
Mistral-7B-Instruct-v0.2  & 19.39  & 15.75  & 1565 & 7.51 & 7.78  & 7.25   \\
SPPO (Mistral-it)  & 28.53
& 31.02  & 2163 & 7.59
& 7.84  & 7.34 \\ \hline
Yi-34B-Chat & 27.19  & 21.23  & 2123 & 7.90  & -  & -  \\
Llama-3-70B-Instruct & 33.17  & 33.18  & 1919 & 9.01  & 9.21  & 8.80  \\
GPT-4 Turbo (04/09) & 55.02  & 46.12  & 1802  & 9.19  & 9.38  & 9.00 \\ \hline
\end{tabular}}
\vspace{0.1cm}
\caption{Results on AlpacaEval 2.0 and MT-Bench. The red arrows indicate the increment from the SFT model (i.e., Zephyr-7B-DPO and Llama-3-8B-Instruct). Compared to online DPO and online SELM baselines, our method achieves superior performance and is competitive with larger models.}
\vspace{-1em}
\label{tab_chat}
\end{table}


\begin{table}[t]
\small
\resizebox{0.99\textwidth}{!}{
\begin{tabular}{l|ccccccc}
\hline
Models      & \begin{tabular}[c]{@{}c@{}}GSM8K\\(8-s CoT)\end{tabular} & \begin{tabular}[c]{@{}c@{}}HellaSwag\\ (10-s)\end{tabular} & \begin{tabular}[c]{@{}c@{}}ARC\\ (25-s)\end{tabular} & \begin{tabular}[c]{@{}c@{}}TruthfulQA\\ (0-s)\end{tabular} & \begin{tabular}[c]{@{}c@{}}EQ\\ (0-s)\end{tabular} & \begin{tabular}[c]{@{}c@{}}OBQA\\ (10-s)\end{tabular} & Average \\ \hline
Zephyr-7B-SFT            & 43.8 & 82.2 & 57.4 & 43.6 & 39.1 & 35.4 & 50.3 \\
Zephyr-7B-DPO     & \textcolor{red}{47.2} & 84.5 & 61.9 & 45.5 & 65.2 & 38.0 & 57.0 \\
DPO Iter 1 (Zephyr)      & 45.5 & 85.2 & 62.1 & 52.4 & 68.4 & 39.0 & 58.8 \\
DPO Iter 2 (Zephyr)      & 44.9 & 85.4 & 62.0 & 53.1 & 69.3 & 39.4 & 59.0 \\
DPO Iter 3 (Zephyr)       & 43.2 & 85.2 & 60.8 & 52.5 & 69.1 & 39.6 & 58.4 \\
SELM Iter 1 (Zephyr)       & 46.3 & 84.8 & \textcolor{blue}{62.9} & 52.9 & 68.8 & 39.6 & 59.2 \\
SELM Iter 2 (Zephyr)      & 46.2 & 85.4 & 62.1 & \textcolor{blue}{53.1} & 69.3 & 39.6 & 59.3 \\
SELM Iter 3 (Zephyr)   & 43.8 & \textcolor{blue}{85.4} & 61.9 & 52.4 & \textcolor{blue}{69.9} & \textcolor{blue}{39.8} & 58.9 \\ 
COPO Iter 1 (Zephyr)   & 46.8 & 85.0 & 62.4 & 53.0 & 68.7 & 39.3 & 59.2 \\ 
COPO Iter 2 (Zephyr)   & 46.7 & 85.3 & 62.5 & \textcolor{blue}{53.3} & 69.1 & \textcolor{blue}{39.8} & \textcolor{blue}{59.5} \\ 
COPO Iter 3 (Zephyr)   & \textcolor{blue}{47.0} & \textcolor{blue}{85.4} & \textcolor{blue}{62.9} & \textcolor{red}{53.4} & \textcolor{blue}{69.9} & \textcolor{red}{40.3} & \textcolor{red}{59.9} \\ 
\hline
Llama-3-8B-Instruct  & 76.7 & 78.6 & 60.8 & 51.7 & 61.8 & 38.0 & 61.3 \\
DPO Iter 1 (Llama3-It)      & 78.5 & 81.7 & 63.9 & 55.5 & 64.1 & 42.6 & 64.4 \\
DPO Iter 2 (Llama3-It)      & 79.4 & 81.7 & 64.4 & 56.4 & 64.3 & 42.6 & 64.8 \\
DPO Iter 3 (Llama3-It)       & 80.1 & 81.7 & 64.1 & 56.5 & 64.1 & 42.6 & 64.8 \\
SELM Iter 1 (Llama3-It)      & 78.7 & 81.7 & 64.5 & 55.4 & 64.1 & 42.4 & 64.5 \\
SELM Iter 2 (Llama3-It)      & 79.3 & 81.8 & \textcolor{blue}{64.7} & 56.5 & 64.2 & 42.6 & 64.9 \\
SELM Iter 3 (Llama3-It)     & \textcolor{blue}{80.1} & \textcolor{blue}{81.8} & 64.3 & 56.5 & 64.2 & 42.8 & \textcolor{blue}{65.0} \\ 
COPO Iter 1 (Llama3-It)   & 79.1 & 81.7 & 64.3 & 56.4 & \textcolor{blue}{64.3} & 43.0 & 64.8 \\ 
COPO Iter 2 (Llama3-It)   & 79.3 & 81.8 & 64.6 & 56.4 & 64.4 & \textcolor{blue}{43.2} & \textcolor{blue}{65.0} \\ 
COPO Iter 3 (Llama3-It)   &\textcolor{red}{80.2} & \textcolor{blue}{81.8} & \textcolor{blue}{64.7} & \textcolor{blue}{56.5} & \textcolor{red}{64.4} & \textcolor{red}{43.6} & \textcolor{red}{65.2} \\ 
\hline
SPIN            & 44.7 & 85.9 & 65.9 & 55.6 & 54.4 & 39.6 & 57.7 \\
Mistral-7B-Instruct-v0.2   & 43.4 & 85.3 & 63.4 & 67.5 & 65.9 & 41.2 & 61.1 \\
SPPO (Mistral-it)    & 42.4 & 85.6 & 65.4 & 70.7 & 56.5 & 40.0 & 60.1 \\ \hline
\end{tabular}}
\caption{Performance comparison between COPO and the baselines on academic multi-choice QA benchmarks in standard zero-shot, few-shot, and CoT settings. Here, n-s refers to n-shot. The red and blue texts represent the best and the second-best results.}
\vspace{-1em}
\label{tab_academic}
\end{table}

We evaluate our approach and the baseline models using established academic benchmarks from the LLM leaderboard \citep{eval-harness}, such as GSM8K \citep{cobbe2021training}, HellaSwag \citep{zellers2019hellaswag}, ARC challenge \citep{clark2018think}, TruthfulQA \citep{lin2021truthfulqa}, EQ-Bench \citep{paech2023eq}, and OpenBookQA (OBQA) \citep{mihaylov2018can}. Following the settings in SELM \citep{zhang2024selm}, we employ various Chain of Thought (CoT) configurations, including zero-shot and few-shot scenarios. Table~\ref{tab_academic} presents the results for these benchmarks. Furthermore, our method exhibits improved performance on these academic benchmarks after incorporating preference alignment on language tasks. Our technique achieves a suitable balance between maximizing rewards and exploring the response space without compromising the reasoning accuracy in academic tasks. However, since the preference dataset focuses on how well the model follows instructions or completes tasks as intended by humans, such an alignment process might not necessarily align well with the requirements of some academic benchmarks (e.g., ARC), which often require abstract reasoning, complex inference, or extensive factual knowledge that may not be enhanced by instruction following.

\subsection{Ablation Study}

We conduct an ablation study on Llama model to show the effectiveness of the proposed exploration term in COPO. Table~\ref{tab:ablation-a} shows the results of the evaluation on AlpacaEval with different factors $\alpha$ of the exploration terms. According to the results, choosing a suitable $\alpha$ is important to balance exploration and preference alignment. We visualize the count-based term (i.e., $\mathbb{E}_{x,y\in \cD_t} 1/\sqrt{N(x,y)+\lambda}$) in Fig.~\ref{tab:ablation-b}, where the steps contain 3 iterations split by a dashed line. After each iteration, we collect new responses on the prompt set and reupdate the CFN network. The result shows that a large $\alpha$ encourages the LLM to collect novel responses in the first two iterations. After preference alignment, the optimism term in the last iteration decreases as the policy has converged to a local-optimal solution and the performance will not increase in the last iteration. For a small $\alpha$, the LLM policy gradually collects novel responses in each iteration, while the final performance is limited by its exploration ability. A suitable $\alpha$ enables the policy to focus on exploration in early iterations and preference alignment in later iterations, resulting in better final performance. 

\begin{minipage}[t!]{\textwidth}
    \begin{minipage}[h]{0.35\textwidth}
        \centering
        \small
        \begin{tabular}{l|ccc}
        \toprule
        \textbf{Factor} & \textbf{Iter1} & \textbf{Iter2} & \textbf{Iter3} \\\hline
        0.01  & 32.89  & 33.18 & 33.76  \\\hline
        0.10  & 33.68  & 34.30 & 35.54  \\\hline
        0.50  & 33.70  & 34.61 & 34.82 \\ \hline
        \end{tabular}
        \makeatletter\def\@captype{table}\makeatother\caption{COPO results on AlpacaEval 2.0 with different exploration factor $\alpha$ in 3 iterations.}
        \label{tab:ablation-a}
    \end{minipage}
    \begin{minipage}[h]{0.7\textwidth}
        \centering
        \includegraphics[height=0.30\textwidth]{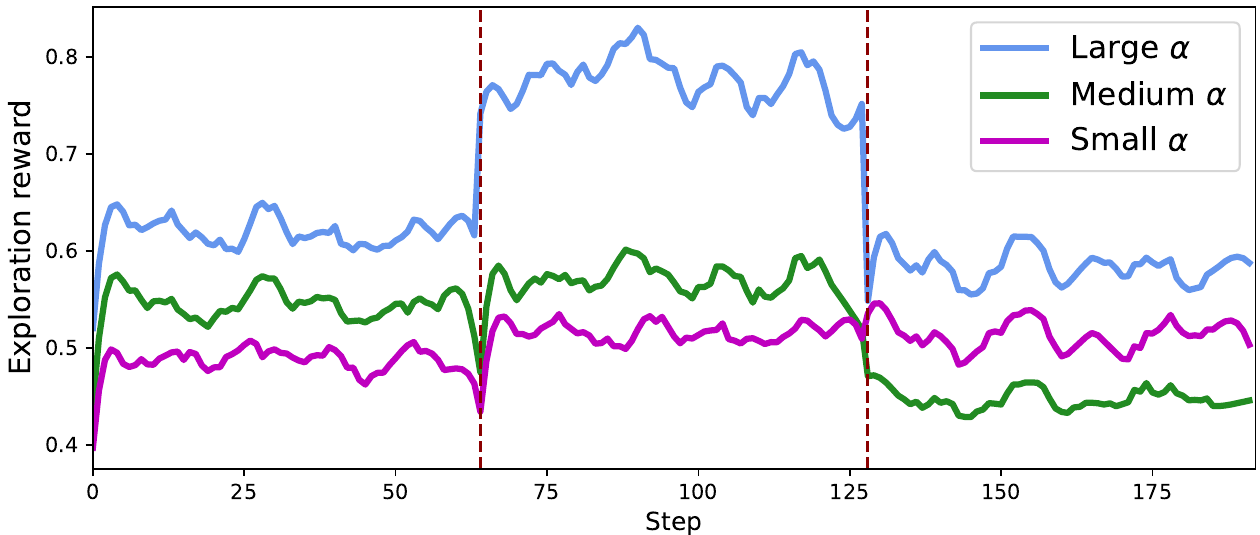}
        \vspace{-1em}\makeatletter\def\@captype{figure}\makeatother\caption{Exploration rewards in 3 iterations with different $\alpha$.}
        \vspace{1em}
        \label{tab:ablation-b}   
    \end{minipage}
\vspace{-1em}
\end{minipage}

\section{Conclusion}
\vspace{-0.5em}

This paper presents COPO, a novel algorithm for online RLHF of LLMs. COPO integrates count-based exploration with the RLHF framework, achieving a tight regret bound policy through the use of an UCB bonus. COPO can balance exploration and preference optimization via a lightweight pseudo-counting module, and obtains superior performance in AlphaEval 2.0, MT-Bench, and LLM leaderboard evaluations compared to other leading online RLHF algorithms. A future direction is to further enhance the exploration ability of our method by using a changing prompt set, avoiding the restrictions on the initial dataset and resulting in better coverage on the prompt-response space. Meanwhile, an automatic tuned exploration factor according to the status of the LLM policy and the collected data would be better to balance exploration and alignment in different iterations. 

\clearpage

\section*{Reproducibility Statement}

For the theoretical part, we provide the detailed theoretical proof in Appendix~\ref{app:proof}. For the practical part, we give experiment setup in Section \ref{sec:exp}. The hyper-parameters and implementation details are given in Appendix~\ref{app:imple}. The code is released at \url{https://github.com/Baichenjia/COPO}.

\section*{Acknowledgments}

This work is supported by National Key Research and Development Program of China (Grant No.2024YFE0210900) and National Natural Science Foundation of China (Grant No.62306242).


\bibliography{iclr2025_conference}

\begin{thebibliography}{76}
\providecommand{\natexlab}[1]{#1}
\providecommand{\url}[1]{\texttt{#1}}
\expandafter\ifx\csname urlstyle\endcsname\relax
  \providecommand{\doi}[1]{doi: #1}\else
  \providecommand{\doi}{doi: \begingroup \urlstyle{rm}\Url}\fi

\bibitem[Abbasi-Yadkori et~al.(2011)Abbasi-Yadkori, P{\'a}l, and Szepesv{\'a}ri]{abbasi2011improved}
Yasin Abbasi-Yadkori, D{\'a}vid P{\'a}l, and Csaba Szepesv{\'a}ri.
\newblock Improved algorithms for linear stochastic bandits.
\newblock \emph{Advances in neural information processing systems}, 24, 2011.

\bibitem[Agarwal et~al.(2019)Agarwal, Jiang, Kakade, and Sun]{agarwal2019rl_theory}
Alekh Agarwal, Nan Jiang, Sham~M Kakade, and Wen Sun.
\newblock Reinforcement learning: Theory and algorithms.
\newblock \emph{CS Dept., UW Seattle, Seattle, WA, USA, Tech. Rep}, 32:\penalty0 96, 2019.

\bibitem[Ahmadian et~al.(2024)Ahmadian, Cremer, Gall{\'e}, Fadaee, Kreutzer, {\"U}st{\"u}n, and Hooker]{ahmadian2024back}
Arash Ahmadian, Chris Cremer, Matthias Gall{\'e}, Marzieh Fadaee, Julia Kreutzer, Ahmet {\"U}st{\"u}n, and Sara Hooker.
\newblock Back to basics: Revisiting reinforce style optimization for learning from human feedback in llms.
\newblock \emph{arXiv preprint arXiv:2402.14740}, 2024.

\bibitem[Azar et~al.(2024)Azar, Guo, Piot, Munos, Rowland, Valko, and Calandriello]{azar2024ipo}
Mohammad~Gheshlaghi Azar, Zhaohan~Daniel Guo, Bilal Piot, Remi Munos, Mark Rowland, Michal Valko, and Daniele Calandriello.
\newblock A general theoretical paradigm to understand learning from human preferences.
\newblock In \emph{International Conference on Artificial Intelligence and Statistics}, pp.\  4447--4455. PMLR, 2024.

\bibitem[Bai et~al.(2021{\natexlab{a}})Bai, Wang, Han, Garg, Hao, Liu, and Wang]{bai-DB}
Chenjia Bai, Lingxiao Wang, Lei Han, Animesh Garg, Jianye Hao, Peng Liu, and Zhaoran Wang.
\newblock Dynamic bottleneck for robust self-supervised exploration.
\newblock \emph{Advances in Neural Information Processing Systems}, 34:\penalty0 17007--17020, 2021{\natexlab{a}}.

\bibitem[Bai et~al.(2021{\natexlab{b}})Bai, Wang, Han, Hao, Garg, Liu, and Wang]{bai-OB2I}
Chenjia Bai, Lingxiao Wang, Lei Han, Jianye Hao, Animesh Garg, Peng Liu, and Zhaoran Wang.
\newblock Principled exploration via optimistic bootstrapping and backward induction.
\newblock In \emph{International Conference on Machine Learning}, pp.\  577--587. PMLR, 2021{\natexlab{b}}.

\bibitem[Bai et~al.(2022{\natexlab{a}})Bai, Wang, Yang, Deng, Garg, Liu, and Wang]{bai2022pbrl}
Chenjia Bai, Lingxiao Wang, Zhuoran Yang, Zhi-Hong Deng, Animesh Garg, Peng Liu, and Zhaoran Wang.
\newblock Pessimistic bootstrapping for uncertainty-driven offline reinforcement learning.
\newblock In \emph{International Conference on Learning Representations}, 2022{\natexlab{a}}.

\bibitem[Bai et~al.(2024)Bai, Wang, Hao, Yang, Zhao, Wang, and Li]{bai-AIJ}
Chenjia Bai, Lingxiao Wang, Jianye Hao, Zhuoran Yang, Bin Zhao, Zhen Wang, and Xuelong Li.
\newblock Pessimistic value iteration for multi-task data sharing in offline reinforcement learning.
\newblock \emph{Artificial Intelligence}, 326:\penalty0 104048, 2024.

\bibitem[Bai et~al.(2022{\natexlab{b}})Bai, Jones, Ndousse, Askell, Chen, DasSarma, Drain, Fort, Ganguli, Henighan, et~al.]{bai2022rlhf2}
Yuntao Bai, Andy Jones, Kamal Ndousse, Amanda Askell, Anna Chen, Nova DasSarma, Dawn Drain, Stanislav Fort, Deep Ganguli, Tom Henighan, et~al.
\newblock Training a helpful and harmless assistant with reinforcement learning from human feedback.
\newblock \emph{arXiv preprint arXiv:2204.05862}, 2022{\natexlab{b}}.

\bibitem[Bellemare et~al.(2016)Bellemare, Srinivasan, Ostrovski, Schaul, Saxton, and Munos]{bellemare2016PG}
Marc Bellemare, Sriram Srinivasan, Georg Ostrovski, Tom Schaul, David Saxton, and Remi Munos.
\newblock Unifying count-based exploration and intrinsic motivation.
\newblock \emph{Advances in neural information processing systems}, 29, 2016.

\bibitem[Bradley \& Terry(1952)Bradley and Terry]{bradley1952BTmodel}
Ralph~Allan Bradley and Milton~E Terry.
\newblock Rank analysis of incomplete block designs: I. the method of paired comparisons.
\newblock \emph{Biometrika}, 39\penalty0 (3/4):\penalty0 324--345, 1952.

\bibitem[Burda et~al.(2018)Burda, Edwards, Storkey, and Klimov]{burda2018rnd}
Yuri Burda, Harrison Edwards, Amos Storkey, and Oleg Klimov.
\newblock Exploration by random network distillation.
\newblock \emph{arXiv preprint arXiv:1810.12894}, 2018.

\bibitem[Cen et~al.(2024)Cen, Mei, Goshvadi, Dai, Yang, Yang, Schuurmans, Chi, and Dai]{cen2024vpo}
Shicong Cen, Jincheng Mei, Katayoon Goshvadi, Hanjun Dai, Tong Yang, Sherry Yang, Dale Schuurmans, Yuejie Chi, and Bo~Dai.
\newblock Value-incentivized preference optimization: A unified approach to online and offline rlhf.
\newblock \emph{arXiv preprint arXiv:2405.19320}, 2024.

\bibitem[Christiano et~al.(2017)Christiano, Leike, Brown, Martic, Legg, and Amodei]{christiano2017deep}
Paul~F Christiano, Jan Leike, Tom Brown, Miljan Martic, Shane Legg, and Dario Amodei.
\newblock Deep reinforcement learning from human preferences.
\newblock \emph{Advances in neural information processing systems}, 30, 2017.

\bibitem[Clark et~al.(2018)Clark, Cowhey, Etzioni, Khot, Sabharwal, Schoenick, and Tafjord]{clark2018think}
Peter Clark, Isaac Cowhey, Oren Etzioni, Tushar Khot, Ashish Sabharwal, Carissa Schoenick, and Oyvind Tafjord.
\newblock Think you have solved question answering? try arc, the ai2 reasoning challenge.
\newblock \emph{arXiv preprint arXiv:1803.05457}, 2018.

\bibitem[Cobbe et~al.(2021)Cobbe, Kosaraju, Bavarian, Chen, Jun, Kaiser, Plappert, Tworek, Hilton, Nakano, et~al.]{cobbe2021training}
Karl Cobbe, Vineet Kosaraju, Mohammad Bavarian, Mark Chen, Heewoo Jun, Lukasz Kaiser, Matthias Plappert, Jerry Tworek, Jacob Hilton, Reiichiro Nakano, et~al.
\newblock Training verifiers to solve math word problems.
\newblock \emph{arXiv preprint arXiv:2110.14168}, 2021.

\bibitem[Cui et~al.(2023)Cui, Yuan, Ding, Yao, Zhu, Ni, Xie, Liu, and Sun]{cui2023ultrafeedback}
Ganqu Cui, Lifan Yuan, Ning Ding, Guanming Yao, Wei Zhu, Yuan Ni, Guotong Xie, Zhiyuan Liu, and Maosong Sun.
\newblock Ultrafeedback: Boosting language models with high-quality feedback.
\newblock \emph{arXiv preprint arXiv:2310.01377}, 2023.

\bibitem[Dong et~al.(2024)Dong, Xiong, Pang, Wang, Zhao, Zhou, Jiang, Sahoo, Xiong, and Zhang]{rlhflow}
Hanze Dong, Wei Xiong, Bo~Pang, Haoxiang Wang, Han Zhao, Yingbo Zhou, Nan Jiang, Doyen Sahoo, Caiming Xiong, and Tong Zhang.
\newblock Rlhf workflow: From reward modeling to online rlhf.
\newblock \emph{arXiv preprint arXiv:2405.07863}, 2024.

\bibitem[Dubey et~al.(2024)Dubey, Jauhri, Pandey, Kadian, Al-Dahle, Letman, Mathur, Schelten, Yang, Fan, et~al.]{dubey2024llama}
Abhimanyu Dubey, Abhinav Jauhri, Abhinav Pandey, Abhishek Kadian, Ahmad Al-Dahle, Aiesha Letman, Akhil Mathur, Alan Schelten, Amy Yang, Angela Fan, et~al.
\newblock The llama 3 herd of models.
\newblock \emph{arXiv preprint arXiv:2407.21783}, 2024.

\bibitem[Dubois et~al.(2024)Dubois, Galambosi, Liang, and Hashimoto]{dubois2024length}
Yann Dubois, Bal{\'a}zs Galambosi, Percy Liang, and Tatsunori~B Hashimoto.
\newblock Length-controlled alpacaeval: A simple way to debias automatic evaluators.
\newblock \emph{arXiv preprint arXiv:2404.04475}, 2024.

\bibitem[Dwaracherla et~al.(2024)Dwaracherla, Asghari, Hao, and Van~Roy]{dwaracherla2024efficient}
Vikranth Dwaracherla, Seyed~Mohammad Asghari, Botao Hao, and Benjamin Van~Roy.
\newblock Efficient exploration for llms.
\newblock \emph{arXiv preprint arXiv:2402.00396}, 2024.

\bibitem[Ethayarajh et~al.(2024{\natexlab{a}})Ethayarajh, Xu, Muennighoff, Jurafsky, and Kiela]{KTO}
Kawin Ethayarajh, Winnie Xu, Niklas Muennighoff, Dan Jurafsky, and Douwe Kiela.
\newblock Kto: Model alignment as prospect theoretic optimization.
\newblock In \emph{International Conference on Machine Learning}, 2024{\natexlab{a}}.

\bibitem[Ethayarajh et~al.(2024{\natexlab{b}})Ethayarajh, Xu, Muennighoff, Jurafsky, and Kiela]{ethayarajh2024kto}
Kawin Ethayarajh, Winnie Xu, Niklas Muennighoff, Dan Jurafsky, and Douwe Kiela.
\newblock Kto: Model alignment as prospect theoretic optimization.
\newblock \emph{arXiv preprint arXiv:2402.01306}, 2024{\natexlab{b}}.

\bibitem[Gao et~al.(2023)Gao, Schulman, and Hilton]{gao2023scaling}
Leo Gao, John Schulman, and Jacob Hilton.
\newblock Scaling laws for reward model overoptimization.
\newblock In \emph{International Conference on Machine Learning}, pp.\  10835--10866. PMLR, 2023.

\bibitem[Gao et~al.(2024)Gao, Tow, Abbasi, Biderman, Black, DiPofi, Foster, Golding, Hsu, Le~Noac'h, Li, McDonell, Muennighoff, Ociepa, Phang, Reynolds, Schoelkopf, Skowron, Sutawika, Tang, Thite, Wang, Wang, and Zou]{eval-harness}
Leo Gao, Jonathan Tow, Baber Abbasi, Stella Biderman, Sid Black, Anthony DiPofi, Charles Foster, Laurence Golding, Jeffrey Hsu, Alain Le~Noac'h, Haonan Li, Kyle McDonell, Niklas Muennighoff, Chris Ociepa, Jason Phang, Laria Reynolds, Hailey Schoelkopf, Aviya Skowron, Lintang Sutawika, Eric Tang, Anish Thite, Ben Wang, Kevin Wang, and Andy Zou.
\newblock A framework for few-shot language model evaluation, 07 2024.

\bibitem[Guo et~al.(2024)Guo, Zhang, Liu, Liu, Khalman, Llinares, Rame, Mesnard, Zhao, Piot, et~al.]{guo2024oaif}
Shangmin Guo, Biao Zhang, Tianlin Liu, Tianqi Liu, Misha Khalman, Felipe Llinares, Alexandre Rame, Thomas Mesnard, Yao Zhao, Bilal Piot, et~al.
\newblock Direct language model alignment from online ai feedback.
\newblock \emph{arXiv preprint arXiv:2402.04792}, 2024.

\bibitem[Hao et~al.(2023)Hao, Yang, Tang, Bai, Liu, Meng, Liu, and Wang]{bai-survey}
Jianye Hao, Tianpei Yang, Hongyao Tang, Chenjia Bai, Jinyi Liu, Zhaopeng Meng, Peng Liu, and Zhen Wang.
\newblock Exploration in deep reinforcement learning: From single-agent to multiagent domain.
\newblock \emph{IEEE Transactions on Neural Networks and Learning Systems}, 2023.

\bibitem[Houthooft et~al.(2016)Houthooft, Chen, Duan, Schulman, Turck, and Abbeel]{VIME}
Rein Houthooft, Xi~Chen, Yan Duan, John Schulman, Filip~De Turck, and Pieter Abbeel.
\newblock {VIME:} variational information maximizing exploration.
\newblock In \emph{Advances in Neural Information Processing Systems}, 2016.

\bibitem[Hu et~al.(2022)Hu, yelong shen, Wallis, Allen-Zhu, Li, Wang, Wang, and Chen]{hu2022lora}
Edward~J Hu, yelong shen, Phillip Wallis, Zeyuan Allen-Zhu, Yuanzhi Li, Shean Wang, Lu~Wang, and Weizhu Chen.
\newblock Lo{RA}: Low-rank adaptation of large language models.
\newblock In \emph{International Conference on Learning Representations}, 2022.
\newblock URL \url{https://openreview.net/forum?id=nZeVKeeFYf9}.

\bibitem[Ivison et~al.(2024)Ivison, Wang, Liu, Wu, Pyatkin, Lambert, Smith, Choi, and Hajishirzi]{ivison2024unpacking}
Hamish Ivison, Yizhong Wang, Jiacheng Liu, Zeqiu Wu, Valentina Pyatkin, Nathan Lambert, Noah~A Smith, Yejin Choi, and Hannaneh Hajishirzi.
\newblock Unpacking dpo and ppo: Disentangling best practices for learning from preference feedback.
\newblock In \emph{Advances in neural information processing systems}, 2024.

\bibitem[Jiang et~al.(2023)Jiang, Ren, and Lin]{jiang2023llm}
Dongfu Jiang, Xiang Ren, and Bill~Yuchen Lin.
\newblock Llm-blender: Ensembling large language models with pairwise ranking and generative fusion.
\newblock \emph{arXiv preprint arXiv:2306.02561}, 2023.

\bibitem[Jin et~al.(2020)Jin, Yang, Wang, and Jordan]{jin2020lsvi_ucb}
Chi Jin, Zhuoran Yang, Zhaoran Wang, and Michael~I Jordan.
\newblock Provably efficient reinforcement learning with linear function approximation.
\newblock In \emph{Conference on learning theory}, pp.\  2137--2143. PMLR, 2020.

\bibitem[Kearns \& Singh(2002)Kearns and Singh]{KearnsS02}
Michael~J. Kearns and Satinder~P. Singh.
\newblock Near-optimal reinforcement learning in polynomial time.
\newblock \emph{Machine Learning}, 49\penalty0 (2-3):\penalty0 209--232, 2002.

\bibitem[Lambert et~al.(2024)Lambert, Pyatkin, Morrison, Miranda, Lin, Chandu, Dziri, Kumar, Zick, Choi, et~al.]{rewardbench}
Nathan Lambert, Valentina Pyatkin, Jacob Morrison, LJ~Miranda, Bill~Yuchen Lin, Khyathi Chandu, Nouha Dziri, Sachin Kumar, Tom Zick, Yejin Choi, et~al.
\newblock Rewardbench: Evaluating reward models for language modeling.
\newblock \emph{arXiv preprint arXiv:2403.13787}, 2024.

\bibitem[Lattimore \& Szepesv{\'a}ri(2020)Lattimore and Szepesv{\'a}ri]{lattimore2020bandit_algo}
Tor Lattimore and Csaba Szepesv{\'a}ri.
\newblock \emph{Bandit algorithms}.
\newblock Cambridge University Press, 2020.

\bibitem[Lee et~al.(2024{\natexlab{a}})Lee, Bai, Pres, Wattenberg, Kummerfeld, and Mihalcea]{leemechanistic}
Andrew Lee, Xiaoyan Bai, Itamar Pres, Martin Wattenberg, Jonathan~K Kummerfeld, and Rada Mihalcea.
\newblock A mechanistic understanding of alignment algorithms: A case study on dpo and toxicity.
\newblock In \emph{International Conference on Machine Learning}, 2024{\natexlab{a}}.

\bibitem[Lee et~al.(2024{\natexlab{b}})Lee, Kim, Yousefpour, Seo, Yoo, and Yu]{lee2024aligning}
Sangkyu Lee, Sungdong Kim, Ashkan Yousefpour, Minjoon Seo, Kang~Min Yoo, and Youngjae Yu.
\newblock Aligning large language models by on-policy self-judgment.
\newblock \emph{arXiv preprint arXiv:2402.11253}, 2024{\natexlab{b}}.

\bibitem[Lin et~al.(2021)Lin, Hilton, and Evans]{lin2021truthfulqa}
Stephanie Lin, Jacob Hilton, and Owain Evans.
\newblock Truthfulqa: Measuring how models mimic human falsehoods.
\newblock \emph{arXiv preprint arXiv:2109.07958}, 2021.

\bibitem[Lobel et~al.(2023)Lobel, Bagaria, and Konidaris]{lobel2023cfn}
Sam Lobel, Akhil Bagaria, and George Konidaris.
\newblock Flipping coins to estimate pseudocounts for exploration in reinforcement learning.
\newblock In \emph{International Conference on Machine Learning}, pp.\  22594--22613. PMLR, 2023.

\bibitem[Mannor \& Tsitsiklis(2004)Mannor and Tsitsiklis]{mannor2004sample}
Shie Mannor and John~N Tsitsiklis.
\newblock The sample complexity of exploration in the multi-armed bandit problem.
\newblock \emph{Journal of Machine Learning Research}, 5\penalty0 (Jun):\penalty0 623--648, 2004.

\bibitem[Meng et~al.(2024)Meng, Xia, and Chen]{meng2024simpo}
Yu~Meng, Mengzhou Xia, and Danqi Chen.
\newblock Simpo: Simple preference optimization with a reference-free reward.
\newblock \emph{arXiv preprint arXiv:2405.14734}, 2024.

\bibitem[Meta(2024)]{meta2024introducing}
AI~Meta.
\newblock Introducing meta llama 3: The most capable openly available llm to date.
\newblock \emph{Meta AI}, 2024.

\bibitem[Mihaylov et~al.(2018)Mihaylov, Clark, Khot, and Sabharwal]{mihaylov2018can}
Todor Mihaylov, Peter Clark, Tushar Khot, and Ashish Sabharwal.
\newblock Can a suit of armor conduct electricity? a new dataset for open book question answering.
\newblock \emph{arXiv preprint arXiv:1809.02789}, 2018.

\bibitem[Munos et~al.(2023)Munos, Valko, Calandriello, Azar, Rowland, Guo, Tang, Geist, Mesnard, Michi, et~al.]{munos2023nash}
R{\'e}mi Munos, Michal Valko, Daniele Calandriello, Mohammad~Gheshlaghi Azar, Mark Rowland, Zhaohan~Daniel Guo, Yunhao Tang, Matthieu Geist, Thomas Mesnard, Andrea Michi, et~al.
\newblock Nash learning from human feedback.
\newblock \emph{arXiv preprint arXiv:2312.00886}, 2023.

\bibitem[Ostrovski et~al.(2017)Ostrovski, Bellemare, Oord, and Munos]{ostrovski2017pixelCNN}
Georg Ostrovski, Marc~G Bellemare, A{\"a}ron Oord, and R{\'e}mi Munos.
\newblock Count-based exploration with neural density models.
\newblock In \emph{International conference on machine learning}, pp.\  2721--2730. PMLR, 2017.

\bibitem[Ouyang et~al.(2022)Ouyang, Wu, Jiang, Almeida, Wainwright, Mishkin, Zhang, Agarwal, Slama, Ray, et~al.]{ouyang2022rlhf1}
Long Ouyang, Jeffrey Wu, Xu~Jiang, Diogo Almeida, Carroll Wainwright, Pamela Mishkin, Chong Zhang, Sandhini Agarwal, Katarina Slama, Alex Ray, et~al.
\newblock Training language models to follow instructions with human feedback.
\newblock \emph{Advances in neural information processing systems}, 35:\penalty0 27730--27744, 2022.

\bibitem[Paech(2023)]{paech2023eq}
Samuel~J Paech.
\newblock Eq-bench: An emotional intelligence benchmark for large language models.
\newblock \emph{arXiv preprint arXiv:2312.06281}, 2023.

\bibitem[Pathak et~al.(2017)Pathak, Agrawal, Efros, and Darrell]{pathak2017curiosity}
Deepak Pathak, Pulkit Agrawal, Alexei~A Efros, and Trevor Darrell.
\newblock Curiosity-driven exploration by self-supervised prediction.
\newblock In \emph{International conference on machine learning}, pp.\  2778--2787. PMLR, 2017.

\bibitem[Qiu et~al.(2022)Qiu, Wang, Bai, Yang, and Wang]{bai-contrastive-ucb}
Shuang Qiu, Lingxiao Wang, Chenjia Bai, Zhuoran Yang, and Zhaoran Wang.
\newblock Contrastive ucb: Provably efficient contrastive self-supervised learning in online reinforcement learning.
\newblock In \emph{International Conference on Machine Learning}, pp.\  18168--18210. PMLR, 2022.

\bibitem[Rafailov et~al.(2023)Rafailov, Sharma, Mitchell, Manning, Ermon, and Finn]{rafailov2023dpo}
Rafael Rafailov, Archit Sharma, Eric Mitchell, Christopher~D Manning, Stefano Ermon, and Chelsea Finn.
\newblock Direct preference optimization: Your language model is secretly a reward model.
\newblock \emph{Advances in Neural Information Processing Systems}, 36, 2023.

\bibitem[Rafailov et~al.(2024{\natexlab{a}})Rafailov, Chittepu, Park, Sikchi, Hejna, Knox, Finn, and Niekum]{rafailov2024scaling}
Rafael Rafailov, Yaswanth Chittepu, Ryan Park, Harshit Sikchi, Joey Hejna, Bradley Knox, Chelsea Finn, and Scott Niekum.
\newblock Scaling laws for reward model overoptimization in direct alignment algorithms.
\newblock \emph{arXiv preprint arXiv:2406.02900}, 2024{\natexlab{a}}.

\bibitem[Rafailov et~al.(2024{\natexlab{b}})Rafailov, Hejna, Park, and Finn]{rafailov2024r}
Rafael Rafailov, Joey Hejna, Ryan Park, and Chelsea Finn.
\newblock From r to q*: Your language model is secretly a q-function.
\newblock \emph{arXiv preprint arXiv:2404.12358}, 2024{\natexlab{b}}.

\bibitem[Rashid et~al.(2020)Rashid, Peng, Boehmer, and Whiteson]{rashid2020optimistic}
Tabish Rashid, Bei Peng, Wendelin Boehmer, and Shimon Whiteson.
\newblock Optimistic exploration even with a pessimistic initialisation.
\newblock In \emph{International Conference on Learning Representations}, 2020.

\bibitem[Schulman et~al.(2017)Schulman, Wolski, Dhariwal, Radford, and Klimov]{schulman2017ppo}
John Schulman, Filip Wolski, Prafulla Dhariwal, Alec Radford, and Oleg Klimov.
\newblock Proximal policy optimization algorithms.
\newblock \emph{arXiv preprint arXiv:1707.06347}, 2017.

\bibitem[Singhal et~al.(2023)Singhal, Goyal, Xu, and Durrett]{singhal2023long}
Prasann Singhal, Tanya Goyal, Jiacheng Xu, and Greg Durrett.
\newblock A long way to go: Investigating length correlations in rlhf.
\newblock \emph{arXiv preprint arXiv:2310.03716}, 2023.

\bibitem[Singhal et~al.(2024)Singhal, Lambert, Niekum, Goyal, and Durrett]{singhal2024d2po}
Prasann Singhal, Nathan Lambert, Scott Niekum, Tanya Goyal, and Greg Durrett.
\newblock D2po: Discriminator-guided dpo with response evaluation models.
\newblock \emph{arXiv preprint arXiv:2405.01511}, 2024.

\bibitem[Strehl \& Littman(2008)Strehl and Littman]{strehl2008MBIE}
Alexander~L Strehl and Michael~L Littman.
\newblock An analysis of model-based interval estimation for markov decision processes.
\newblock \emph{Journal of Computer and System Sciences}, 74\penalty0 (8):\penalty0 1309--1331, 2008.

\bibitem[Swamy et~al.(2024)Swamy, Dann, Kidambi, Wu, and Agarwal]{swamy2024spo}
Gokul Swamy, Christoph Dann, Rahul Kidambi, Zhiwei~Steven Wu, and Alekh Agarwal.
\newblock A minimaximalist approach to reinforcement learning from human feedback.
\newblock \emph{arXiv preprint arXiv:2401.04056}, 2024.

\bibitem[Tang et~al.(2017)Tang, Houthooft, Foote, Stooke, Xi~Chen, Duan, Schulman, DeTurck, and Abbeel]{tang2017exploration}
Haoran Tang, Rein Houthooft, Davis Foote, Adam Stooke, OpenAI Xi~Chen, Yan Duan, John Schulman, Filip DeTurck, and Pieter Abbeel.
\newblock \# exploration: A study of count-based exploration for deep reinforcement learning.
\newblock \emph{Advances in neural information processing systems}, 30, 2017.

\bibitem[Tang et~al.(2024)Tang, Guo, Zheng, Calandriello, Munos, Rowland, Richemond, Valko, Pires, and Piot]{tang2024gpo}
Yunhao Tang, Zhaohan~Daniel Guo, Zeyu Zheng, Daniele Calandriello, R{\'e}mi Munos, Mark Rowland, Pierre~Harvey Richemond, Michal Valko, Bernardo~{\'A}vila Pires, and Bilal Piot.
\newblock Generalized preference optimization: A unified approach to offline alignment.
\newblock \emph{arXiv preprint arXiv:2402.05749}, 2024.

\bibitem[Touvron et~al.(2023)Touvron, Martin, Stone, Albert, Almahairi, Babaei, Bashlykov, Batra, Bhargava, Bhosale, et~al.]{touvron2023llama2}
Hugo Touvron, Louis Martin, Kevin Stone, Peter Albert, Amjad Almahairi, Yasmine Babaei, Nikolay Bashlykov, Soumya Batra, Prajjwal Bhargava, Shruti Bhosale, et~al.
\newblock Llama 2: Open foundation and fine-tuned chat models.
\newblock \emph{arXiv preprint arXiv:2307.09288}, 2023.

\bibitem[Tunstall et~al.(2023)Tunstall, Beeching, Lambert, Rajani, Rasul, Belkada, Huang, von Werra, Fourrier, Habib, et~al.]{tunstall2023zephyr}
Lewis Tunstall, Edward Beeching, Nathan Lambert, Nazneen Rajani, Kashif Rasul, Younes Belkada, Shengyi Huang, Leandro von Werra, Cl{\'e}mentine Fourrier, Nathan Habib, et~al.
\newblock Zephyr: Direct distillation of lm alignment.
\newblock \emph{arXiv preprint arXiv:2310.16944}, 2023.

\bibitem[Xie et~al.(2024)Xie, Foster, Krishnamurthy, Rosset, Awadallah, and Rakhlin]{xie2024xpo}
Tengyang Xie, Dylan~J Foster, Akshay Krishnamurthy, Corby Rosset, Ahmed Awadallah, and Alexander Rakhlin.
\newblock Exploratory preference optimization: Harnessing implicit q*-approximation for sample-efficient rlhf.
\newblock \emph{arXiv preprint arXiv:2405.21046}, 2024.

\bibitem[Xiong et~al.(2024)Xiong, Dong, Ye, Wang, Zhong, Ji, Jiang, and Zhang]{xiong2024iterative}
Wei Xiong, Hanze Dong, Chenlu Ye, Ziqi Wang, Han Zhong, Heng Ji, Nan Jiang, and Tong Zhang.
\newblock Iterative preference learning from human feedback: Bridging theory and practice for rlhf under kl-constraint.
\newblock In \emph{Forty-first International Conference on Machine Learning}, 2024.

\bibitem[Xu et~al.(2024)Xu, Fu, Gao, Ye, Liu, Mei, Wang, Yu, and Wu]{dpo-ppo}
Shusheng Xu, Wei Fu, Jiaxuan Gao, Wenjie Ye, Weilin Liu, Zhiyu Mei, Guangju Wang, Chao Yu, and Yi~Wu.
\newblock Is dpo superior to ppo for llm alignment? a comprehensive study.
\newblock In \emph{International Conference on Machine Learning}, 2024.

\bibitem[Yang et~al.(2022)Yang, Bai, Ma, Wang, Zhang, and Han]{bai-RORL}
Rui Yang, Chenjia Bai, Xiaoteng Ma, Zhaoran Wang, Chongjie Zhang, and Lei Han.
\newblock Rorl: Robust offline reinforcement learning via conservative smoothing.
\newblock \emph{Advances in neural information processing systems}, 35:\penalty0 23851--23866, 2022.

\bibitem[Young et~al.(2024)Young, Chen, Li, Huang, Zhang, Zhang, Li, Zhu, Chen, Chang, et~al.]{young2024yi}
Alex Young, Bei Chen, Chao Li, Chengen Huang, Ge~Zhang, Guanwei Zhang, Heng Li, Jiangcheng Zhu, Jianqun Chen, Jing Chang, et~al.
\newblock Yi: Open foundation models by 01. ai.
\newblock \emph{arXiv preprint arXiv:2403.04652}, 2024.

\bibitem[Yu et~al.(2024)Yu, Bai, He, Wang, and Li]{bai-CAMP}
Xudong Yu, Chenjia Bai, Haoran He, Changhong Wang, and Xuelong Li.
\newblock Regularized conditional diffusion model for multi-task preference alignment.
\newblock \emph{arXiv preprint arXiv:2404.04920}, 2024.

\bibitem[Yuan et~al.(2024{\natexlab{a}})Yuan, Pang, Cho, Sukhbaatar, Xu, and Weston]{yuan2024self}
Weizhe Yuan, Richard~Yuanzhe Pang, Kyunghyun Cho, Sainbayar Sukhbaatar, Jing Xu, and Jason Weston.
\newblock Self-rewarding language models.
\newblock \emph{arXiv preprint arXiv:2401.10020}, 2024{\natexlab{a}}.

\bibitem[Yuan et~al.(2024{\natexlab{b}})Yuan, Shang, Wang, Wang, Shan, Qi, Zhu, Bai, and Li]{bai-quad}
Xinyi Yuan, Zhiwei Shang, Zifan Wang, Chenkai Wang, Zhao Shan, Zhenchao Qi, Meixin Zhu, Chenjia Bai, and Xuelong Li.
\newblock Preference aligned diffusion planner for quadrupedal locomotion control.
\newblock \emph{arXiv preprint arXiv:2410.13586}, 2024{\natexlab{b}}.

\bibitem[Zellers et~al.(2019)Zellers, Holtzman, Bisk, Farhadi, and Choi]{zellers2019hellaswag}
Rowan Zellers, Ari Holtzman, Yonatan Bisk, Ali Farhadi, and Yejin Choi.
\newblock Hellaswag: Can a machine really finish your sentence?
\newblock \emph{arXiv preprint arXiv:1905.07830}, 2019.

\bibitem[Zhang et~al.(2024)Zhang, Yu, Sharma, Yang, Wang, Hassan, and Wang]{zhang2024selm}
Shenao Zhang, Donghan Yu, Hiteshi Sharma, Ziyi Yang, Shuohang Wang, Hany Hassan, and Zhaoran Wang.
\newblock Self-exploring language models: Active preference elicitation for online alignment.
\newblock \emph{arXiv preprint arXiv:2405.19332}, 2024.

\bibitem[Zhao et~al.(2023)Zhao, Joshi, Liu, Khalman, Saleh, and Liu]{zhao2023slic}
Yao Zhao, Rishabh Joshi, Tianqi Liu, Misha Khalman, Mohammad Saleh, and Peter~J Liu.
\newblock Slic-hf: Sequence likelihood calibration with human feedback.
\newblock \emph{arXiv preprint arXiv:2305.10425}, 2023.

\bibitem[Zheng et~al.(2023)Zheng, Chiang, Sheng, Zhuang, Wu, Zhuang, Lin, Li, Li, Xing, et~al.]{zheng2023judging}
Lianmin Zheng, Wei-Lin Chiang, Ying Sheng, Siyuan Zhuang, Zhanghao Wu, Yonghao Zhuang, Zi~Lin, Zhuohan Li, Dacheng Li, Eric Xing, et~al.
\newblock Judging llm-as-a-judge with mt-bench and chatbot arena.
\newblock \emph{Advances in Neural Information Processing Systems}, 36:\penalty0 46595--46623, 2023.

\bibitem[Zhu et~al.(2023)Zhu, Jordan, and Jiao]{zhu2023principled}
Banghua Zhu, Michael Jordan, and Jiantao Jiao.
\newblock Principled reinforcement learning with human feedback from pairwise or k-wise comparisons.
\newblock In \emph{International Conference on Machine Learning}, pp.\  43037--43067. PMLR, 2023.

\bibitem[Ziegler et~al.(2019)Ziegler, Stiennon, Wu, Brown, Radford, Amodei, Christiano, and Irving]{ziegler2019rlhf0}
Daniel~M Ziegler, Nisan Stiennon, Jeffrey Wu, Tom~B Brown, Alec Radford, Dario Amodei, Paul Christiano, and Geoffrey Irving.
\newblock Fine-tuning language models from human preferences.
\newblock \emph{arXiv preprint arXiv:1909.08593}, 2019.

\end{thebibliography}
\bibliographystyle{iclr2025_conference}

\clearpage
\appendix
\section{Theoretical Proof}
\label{app:proof}
\subsection{Derivation of Equation~\eqref{equ:optimistic_obj}}\label{appendix:optimistic_obj}
\begin{proof}
For any $\theta \in \Theta(\hat{\theta}_{\text{MLE}}, \lambda)$, by Cauchy-Schwartz inequality, the following holds,
\begin{align*}
    | (\mathbb{E}_{x\sim \rho} [\phi(x, \pi(x))])^{\top} (\theta - \hat{\theta}_{\text{MLE}}) | &\leq \| \theta - \hat{\theta}_{\text{MLE}} \|_{\Sigma_{\mathcal{D}_t} + \lambda I} \cdot \| \mathbb{E}_{x\sim \rho} [\phi(x, \pi(x))] \|_{(\Sigma_{\mathcal{D}_t} + \lambda I)^{-1}} \\
    &\leq C \cdot \sqrt{\frac{d + \log (\frac{1}{\delta})}{\gamma^2 n} + \lambda B^2} \cdot \| \mathbb{E}_{x\sim \rho} [\phi(x, \pi(x))] \|_{(\Sigma_{\mathcal{D}_t} + \lambda I)^{-1}}
\end{align*}
Thus, we have
\begin{align}
\label{equ:appendix_equ_1}
    (\mathbb{E}_{x\sim \rho} [\phi(x, \pi(x))])^{\top}\theta \leq (\mathbb{E}_{x\sim \rho} [\phi(x, \pi(x))])^{\top}\hat{\theta}_{\text{MLE}} + \xi \cdot \| \mathbb{E}_{x\sim \rho} [\phi(x, \pi(x))] \|_{(\Sigma_{\mathcal{D}_t} + \lambda I)^{-1}}
\end{align}
A feasible way to maximize $\mathbb{E}_{x\sim \rho} [\theta^{\top}\phi(x, \pi(x))]$ over the confidence set $\Theta(\hat{\theta}_{\text{MLE}}, \lambda)$ is to choose the RHS of Eq.~\eqref{equ:appendix_equ_1} as the maximum. Q.E.D.
\end{proof}

\subsection{Proof of Theorem~\ref{thm:suboptimality}}\label{appendix:proof_of_subop_gap}
\begin{proof}
Note that the optimistic expected value function at iteration $t$ is 
\begin{align*}
    \hat{J}_{\beta}(\pi; \mathcal{D}_t) &= \mathop{\max}\limits_{\theta \in \Theta(\hat{\theta}_{\text{MLE}}, \lambda)} \mathbb{E}_{x\sim \rho, y\sim \pi (\cdot | x)} [\theta^{\top}(\phi(x, y))] - \beta \cdot \mathbb{E}_{x \sim \rho} [\KL ( \pi(\cdot | x) \Vert \pi_{\text{ref}}(\cdot | x) )].
\end{align*}
For ease of presentation, we define
\begin{align*}
    J_{\beta}^{\star}(\pi) &= \mathbb{E}_{x \sim \rho, y \sim \pi(\cdot | x)} [r_{{\theta}^{\star}}(x, y)] - \beta \cdot \mathbb{E}_{x \sim \rho} [\KL (\pi(\cdot | x) \Vert \pi_{\text{ref}}(\cdot | x)  )] \\
    & = \mathbb{E}_{x \sim \rho}[{\theta^{\star}}^{\top}\phi(x, \pi(x))] - \beta \cdot \mathbb{E}_{x \sim \rho} [\KL (\pi(\cdot | x) \Vert \pi_{\text{ref}}(\cdot | x)  )],
\end{align*}
and omit the $\mathcal{D}_t$ in $\hat{J}_{\beta}(\pi; \mathcal{D}_t)$, i.e., setting $\hat{J}_{\beta}(\pi) := \hat{J}_{\beta}(\pi; \mathcal{D}_t)$.

According to Lemma~\ref{lemma:bound_MLE}, we know that $\theta^{\star} \in \Theta(\hat{\theta}_{\text{MLE}}, \lambda)$ with probability at least $1 - \delta$. Consequently, we have
\begin{align}
\label{equ:optimistic_ineq}
    \hat{J}_{\beta}({\pi}) \geq J_{\beta}^{\star}({\pi}), \ \text{for any } \pi
\end{align}
Meanwhile, by $\hat{\pi}_t = \mathop{\arg\max}_{\pi} \hat{J}_{\beta} (\pi)$ we also have
\begin{align}
\label{equ:op_obj_ineq}
    \hat{J}_{\beta}(\hat{\pi}_t) \geq \hat{J}_{\beta}(\pi^{\star})
\end{align}
Combining Eq.~\eqref{equ:optimistic_ineq} and Eq.~\eqref{equ:op_obj_ineq}, we have
\begin{align*}
    \hat{J}_{\beta}(\hat{\pi}_t) \geq \hat{J}_{\beta}(\pi^{\star}) \geq J_{\beta}^{\star}(\pi^{\star})
\end{align*}
Substituting this into the definition of the suboptimality gap, we achieve
\begin{align*}
    \text{SubOpt}(\hat{\pi}_t) = J_{\beta}^{\star}(\pi^{\star}) - J_{\beta}^{\star}(\hat{\pi}_t) \leq \hat{J}_{\beta}(\hat{\pi}_t) - J_{\beta}^{\star}(\hat{\pi}_t)
\end{align*}
Here we need to introduce a necessary notation of $\hat{\theta}_t$:
\begin{align*}
    \hat{\theta}_t = \mathop{\arg\max}\limits_{\theta \in \Theta(\hat{\theta}_{\text{MLE}}, \lambda)} \{ (\mathbb{E}_{x\sim \rho} [\phi(x, \hat{\pi}_t(x))])^{\top}\theta - \beta \cdot \mathbb{E}_{x \sim \rho} [\KL ( \hat{\pi}_t(\cdot | x) \Vert \pi_{\text{ref}}(\cdot | x) )] \}
\end{align*}
With the extra notation, we can further have
\begin{align*}
    \text{SubOpt}(\hat{\pi}_t) &\leq \hat{J}_{\beta}(\hat{\pi}_t) - J_{\beta}^{\star}(\hat{\pi}_t) \\
    &= \mathbb{E}_{x\sim \rho} [(\hat{\theta}_t - \theta^{\star} )^{\top}\phi(x, \hat{\pi}_t(x))] \\
    &= \mathbb{E}_{x\sim \rho} [(\hat{\theta}_t - \hat{\theta}_{\text{MLE}} )^{\top}\phi(x, \hat{\pi}_t(x))] + \mathbb{E}_{x\sim \rho} [(\hat{\theta}_{\text{MLE}} - \theta^{\star} )^{\top}\phi(x, \hat{\pi}_t(x))] \\
    &\leq (\| \hat{\theta}_t - \hat{\theta}_{\text{MLE}} \|_{\Sigma_{\mathcal{D}_t} + \lambda I} + \| \hat{\theta}_{\text{MLE}} - \theta^{\star} \|_{\Sigma_{\mathcal{D}_t} + \lambda I}) \cdot \| \mathbb{E}_{x \sim \rho} [\phi(x, \hat{\pi}_t(x))] \|_{(\Sigma_{\mathcal{D}_t} + \lambda I)^{-1}} \\
    &\leq 2 C \cdot \sqrt{\frac{d + \log (1/\delta)}{\gamma^2 n} + \lambda B^2} \cdot \| \mathbb{E}_{x \sim \rho} [\phi(x, \hat{\pi}_t(x))] \|_{(\Sigma_{\mathcal{D}_t} + \lambda I)^{-1}},
\end{align*}
where the second inequality uses Cauchy-Schwarz inequality, and the last inequality is obtained by Lemma~\ref{lemma:bound_MLE}.
\end{proof}

\subsection{Proof of Theorem~\ref{thm:regret_bound}}\label{appendix:proof_regret_bound}
We first present an important lemma and a corollary for a special case. Then, we combine the lemma and corollary to prove Theorem~\ref{thm:regret_bound}.
\begin{lemma}
\label{lemma:feature_norm}
{\rm \citep{abbasi2011improved}}{\bf .} Let $\{ \phi_{t} \}_{t \geq 0}$ be a bounded sequence in $\mathbb{R}^{d}$ satisfying $\sup_{t\geq 0} \| \phi_t \| \leq 1$. Let $\Lambda_0 \in \mathbb{R}^{d \times d}$ be a positive definite matrix. For any $t \geq 0$, we define $\Lambda_t  = \Lambda_0 + \sum_{j=1}^{t}\phi_{j}\phi_{j}^{\top}$. Then, if the smallest eigenvalue of $\Lambda_0$ satisfies $\lambda_{\text{min}}(\lambda_0) \geq 1$, we have
\begin{align}
\label{equ:feature_norm_w/o_pair}
    \log\left[ \frac{\det(\Lambda_t)}{\det(\Lambda_0)} \right] \leq \sum_{j=1}^{t}\phi_{j}^{\top}\Lambda_{j-1}^{-1}\phi_{j} \leq 2\log\left[ \frac{\det(\Lambda_t)}{\det(\Lambda_0)} \right].
\end{align}
\end{lemma}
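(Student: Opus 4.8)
The plan is to collapse both inequalities to a single telescoping determinant identity and then compare the sum $\sum_{j=1}^{t}\phi_j^\top \Lambda_{j-1}^{-1}\phi_j$ against $\log[\det(\Lambda_t)/\det(\Lambda_0)]$ using two elementary scalar inequalities. First I would exploit the fact that each update is rank one: since $\Lambda_j = \Lambda_{j-1} + \phi_j\phi_j^\top$, the matrix determinant lemma gives $\det(\Lambda_j) = \det(\Lambda_{j-1})\bigl(1 + \phi_j^\top \Lambda_{j-1}^{-1}\phi_j\bigr)$. Writing $u_j := \phi_j^\top \Lambda_{j-1}^{-1}\phi_j \geq 0$ (nonnegative because $\Lambda_{j-1}^{-1}$ is positive definite), this telescopes to the exact identity $\log[\det(\Lambda_t)/\det(\Lambda_0)] = \sum_{j=1}^{t}\log(1+u_j)$, which is the object both sides of the claim must be compared to.

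For the left inequality I would simply apply $\log(1+x)\leq x$ for $x\geq 0$, so $\log[\det(\Lambda_t)/\det(\Lambda_0)] = \sum_j \log(1+u_j) \leq \sum_j u_j = \sum_j \phi_j^\top \Lambda_{j-1}^{-1}\phi_j$. For the right inequality I would use the reverse comparison $x \leq 2\log(1+x)$, which holds exactly on $x\in[0,1]$ — the function $2\log(1+x)-x$ vanishes at $0$ and has derivative $2/(1+x)-1\geq 0$ there, so it stays nonnegative up to $x=1$. Applying this termwise and summing yields $\sum_j u_j \leq 2\sum_j \log(1+u_j) = 2\log[\det(\Lambda_t)/\det(\Lambda_0)]$, which is the second bound. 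The step that makes this legitimate is certifying $u_j \leq 1$ for every $j$, and this is exactly where the hypotheses are consumed: because each $\phi_j\phi_j^\top \succeq 0$ we have $\Lambda_{j-1}\succeq \Lambda_0$, hence $\lambda_{\min}(\Lambda_{j-1})\geq \lambda_{\min}(\Lambda_0)\geq 1$, which forces $\Lambda_{j-1}^{-1}\preceq I$; combined with $\|\phi_j\|\leq 1$ this gives $u_j = \phi_j^\top \Lambda_{j-1}^{-1}\phi_j \leq \|\phi_j\|^2 \leq 1$.

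The hard part will be precisely this per-step bound $u_j\leq 1$: the factor of $2$ on the right-hand side is not cosmetic but is dictated by the range restriction in the inequality $x\leq 2\log(1+x)$, so the argument collapses without the eigenvalue assumption $\lambda_{\min}(\Lambda_0)\geq 1$ that keeps every $u_j$ inside $[0,1]$. Once that monotonicity-of-eigenvalues observation is in place, the remainder is a routine telescoping computation and the two scalar estimates, and no probabilistic machinery is needed since this lemma is a deterministic statement about the sequence $\{\phi_j\}$.
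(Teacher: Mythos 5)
Your proposal is correct and follows essentially the same route as the paper's proof: the telescoping rank-one determinant identity $\det(\Lambda_j)=\det(\Lambda_{j-1})(1+\phi_j^\top\Lambda_{j-1}^{-1}\phi_j)$, the scalar bounds $\log(1+x)\leq x\leq 2\log(1+x)$ on $[0,1]$, and the per-step bound $\phi_j^\top\Lambda_{j-1}^{-1}\phi_j\leq[\lambda_{\min}(\Lambda_0)]^{-1}\|\phi_j\|^2\leq 1$. Your explicit justification of that last step via $\Lambda_{j-1}\succeq\Lambda_0$ is slightly more careful than the paper's, but the argument is the same.
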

\begin{proof}
The following proof refers mostly to \citet{jin2020lsvi_ucb}. Since $\lambda_{\text{min}}(\Lambda_0) \geq 1$ and $\| \phi_t \| \leq 1$ for all $j \geq 0$, we have
\begin{align*}
    \phi_{j}^{\top}\Lambda_{j-1}^{-1}\phi_{j} \leq [\lambda_{\text{min}}(\Lambda_0)]^{-1} \cdot \| \phi_j \|^2 \leq 1, \quad \forall j \geq 0.
\end{align*}
For any $x \in [0, 1]$, it holds that $\log (1 + x) \leq x \leq 2\log(1+x)$. Therefore, we have 
\begin{align}
\label{equ:semi_norm_ineq}
    \sum_{j=1}^{t}\log(1 + \phi_{j}^{\top}\Lambda_{j-1}^{-1}\phi_{j}) \leq \sum_{j=1}^{t}\phi_{j}^{\top}\Lambda_{j-1}^{-1}\phi_{j} \leq 2\sum_{j=1}^{t}\log(1 + \phi_{j}^{\top}\Lambda_{j-1}^{-1}\phi_{j})
\end{align}
Elementary algebra gives
\begin{align*}
    \det (\Lambda_t) &= \det (\Lambda_{t-1} + \phi_t \phi_{t}^{\top}) = \det (\Lambda_{t-1}) \det(I + \Lambda_{t-1}^{-\frac{1}{2}}\phi_t \phi_{t}^{\top}\Lambda_{t-1}^{-\frac{1}{2}})\\
    &= \det (\Lambda_{t-1})(1 + \phi_{t}^{\top}\Lambda_{t-1}^{-1}\phi_{t}) = \det(\Lambda_0)\prod_{i=1}^{t}(1 + \phi_{i}^{\top}\Lambda_{i-1}^{-1}\phi_{i}).
\end{align*}
Hence, we have
\begin{align}
\label{equ:algebra_eq}
    \sum_{j=1}^{t}\log(1 + \phi_{j}^{\top}\Lambda_{j-1}^{-1}\phi_{j}) = \log\det(\Lambda_t) - \log\det(\Lambda_0)
\end{align}
Combining Eq.~\eqref{equ:semi_norm_ineq} and Eq.~\eqref{equ:algebra_eq}, we conclude the proof of Eq.~\eqref{equ:feature_norm_w/o_pair}.
\end{proof}
\begin{corollary}
\label{corol:general_norm_bound}
Under the same setting of Lemma~\ref{lemma:feature_norm}, if the smallest eigenvalue of $\Lambda_0$ satisfies $\lambda_{\text{min}}(\lambda_0) \geq 4$, for reference vectors $\{\nu_j\}_{j = 1}^{t} \in \mathbb{R}^{d}$ which are also bounded: $\| \nu_j \| \leq 1$ for any $j$, we have
\begin{align}
\label{equ:feature_norm_w/_pair}
    \log\left[ \frac{\det(\Lambda_t)}{\det(\Lambda_0)} \right] \leq \sum_{j=1}^{t}(\phi_{j} + \nu_j)^{\top}\Lambda_{j-1}^{-1}(\phi_{j} + \nu_j) \leq 2\log\left[ \frac{\det(\Lambda_t)}{\det(\Lambda_0)} \right].
\end{align}
\end{corollary}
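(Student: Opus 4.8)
The plan is to reproduce the argument of Lemma~\ref{lemma:feature_norm} almost verbatim, applied to the shifted sequence $\psi_j := \phi_j + \nu_j$, and to show that the strengthened hypothesis $\lambda_{\min}(\Lambda_0) \ge 4$ is precisely what compensates for the larger norm of $\psi_j$. First I would record that, by the triangle inequality together with the two boundedness assumptions $\|\phi_j\| \le 1$ and $\|\nu_j\| \le 1$, the combined vector satisfies $\|\psi_j\| \le 2$. This is the only place where the hypothesis on the reference vectors $\nu_j$ is used.

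Next I would establish the termwise bound $\psi_j^\top \Lambda_{j-1}^{-1}\psi_j \le 1$, which is the analytic heart of the corollary. Since each rank-one update is positive semidefinite, $\Lambda_{j-1} \succeq \Lambda_0$, so $\lambda_{\min}(\Lambda_{j-1}) \ge \lambda_{\min}(\Lambda_0) \ge 4$ and hence $\Lambda_{j-1}^{-1} \preceq \tfrac14 I$. Combining with $\|\psi_j\|^2 \le 4$ gives $\psi_j^\top\Lambda_{j-1}^{-1}\psi_j \le \tfrac14\|\psi_j\|^2 \le 1$. This is exactly why the lemma's eigenvalue threshold $1$ must be raised to $4$ here: doubling the norm bound forces a fourfold increase in the eigenvalue floor so that every quadratic form still lies in $[0,1]$.

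With each term confined to $[0,1]$ I would then apply the elementary inequality $\log(1+x) \le x \le 2\log(1+x)$ termwise, as in the lemma, obtaining $\sum_j \log(1 + \psi_j^\top\Lambda_{j-1}^{-1}\psi_j) \le \sum_j \psi_j^\top\Lambda_{j-1}^{-1}\psi_j \le 2\sum_j \log(1+\psi_j^\top\Lambda_{j-1}^{-1}\psi_j)$. Finally I would invoke the rank-one determinant identity $\det(\Lambda_j) = \det(\Lambda_{j-1})(1 + \psi_j^\top\Lambda_{j-1}^{-1}\psi_j)$ and telescope to identify $\sum_j \log(1+\psi_j^\top\Lambda_{j-1}^{-1}\psi_j) = \log[\det(\Lambda_t)/\det(\Lambda_0)]$, which immediately yields the claimed two-sided bound.

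The one point I would be most careful about — and the main thing to get right — is that the determinant telescoping and the middle quadratic sum must refer to the \emph{same} increments. The identity $\det(\Lambda_j)=\det(\Lambda_{j-1})(1+\psi_j^\top\Lambda_{j-1}^{-1}\psi_j)$ reproduces $\sum_j \psi_j^\top\Lambda_{j-1}^{-1}\psi_j$ only if $\Lambda_t$ is the Gram-type matrix $\Lambda_0 + \sum_{j=1}^t (\phi_j+\nu_j)(\phi_j+\nu_j)^\top$ built from the combined vectors; accordingly I read the ``same setting'' as using these $\psi_j$ updates, since otherwise the middle sum and the determinant ratio would not align. Beyond this bookkeeping and the eigenvalue estimate of the second step, no genuinely new analytic difficulty arises, which is why the statement is a corollary of Lemma~\ref{lemma:feature_norm} rather than an independent result.
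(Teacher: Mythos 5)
Your proposal is correct and follows essentially the same route as the paper's proof: bound each quadratic form by $1$ using $\lambda_{\min}(\Lambda_0)\ge 4$ together with $\|\phi_j+\nu_j\|^2\le 4$ (the paper gets this via $\|\phi_j+\nu_j\|^2\le 2(\|\phi_j\|^2+\|\nu_j\|^2)$, you via the triangle inequality — same constant), then apply the rank-one determinant identity, telescope, and use $\log(1+x)\le x\le 2\log(1+x)$ on $[0,1]$. Your reading of the ``same setting'' — that $\Lambda_t$ is built from the combined increments $(\phi_j+\nu_j)(\phi_j+\nu_j)^\top$ — matches how the paper actually invokes the corollary with $\Sigma_{\mathcal{D}_t}$.
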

\begin{proof}
By triangle inequality, we have
\begin{align*}
    (\phi_{j} + \nu_j)^{\top}\Lambda_{j-1}^{-1}(\phi_{j} + \nu_j) \leq [\lambda_{\text{min}}(\Lambda_0)]^{-1} \cdot \| \phi_j + \nu_j \|^2 \leq [\lambda_{\text{min}}(\Lambda_0)]^{-1} \cdot 2(\| \phi_j \|^2 + \| \nu_j \|^2) \leq 1
\end{align*}
when $\lambda_{\text{min}}(\lambda_0) \geq 4$ and $\| \nu \| \leq 1$. Then we consider the determinant of the updated matrix $\Lambda_t$:

\begin{equation}
\begin{aligned}
\det(\Lambda_t) &= \det(\Lambda_{t-1} + (\phi_t + \nu_t)(\phi_t + \nu_t)^\top)\\
&= \det(\Lambda_{t-1}) \det(I + \Lambda_{t-1}^{-\frac{1}{2}} (\phi_t + \nu_t)(\phi_t + \nu_t)^\top \Lambda_{t-1}^{-\frac{1}{2}}).
\end{aligned}
\end{equation}

Applying the matrix determinant lemma, which states that for any invertible matrix $A$ and vectors $u$ and $v$, we have $\det(A + uv^\top) = \det(A)(1 + v^\top A^{-1} u)$. Then we can simplify the expression as follows:

\begin{equation}
\det(\Lambda_t) = \det(\Lambda_{t-1}) (1 + (\phi_t + \nu_t)^\top \Lambda_{t-1}^{-1} (\phi_t + \nu_t))
\end{equation}

By recursively applying this step, we obtain:

\begin{equation}
\det(\Lambda_t) = \det(\Lambda_0) \prod_{j=1}^t (1 + (\phi_j + \nu_j)^\top \Lambda_{j-1}^{-1} (\phi_j + \nu_j)).
\end{equation}

Taking the logarithm of both sides of the equation, we utilize the property of logarithms that the logarithm of a product is the sum of the logarithms:

\begin{equation}
\log \det(\Lambda_t) = \log \det(\Lambda_0) + \sum_{j=1}^t \log(1 + (\phi_j + \nu_j)^\top \Lambda_{j-1}^{-1} (\phi_j + \nu_j)).
\end{equation}

We rewrite it as:
\begin{equation}
\log \left[ \frac{\det(\Lambda_t)}{\det(\Lambda_0)} \right] = \sum_{j=1}^{t} \log \left(1 + (\phi_j + \nu_j)^\top \Lambda_{j-1}^{-1} (\phi_j + \nu_j)\right),
\end{equation}
and using the property of the logarithm that $\log(1+x) \leq x \leq 2\log(1+x)$ for $x \in [0, 1]$, we have:
\begin{equation}
(\phi_j + \nu_j)^\top \Lambda_{j-1}^{-1} (\phi_j + \nu_j) \leq 2 \log \left(1 + (\phi_j + \nu_j)^\top \Lambda_{j-1}^{-1} (\phi_j + \nu_j)\right).
\end{equation}
Applying this inequality to the sum, we obtain:
\begin{equation}
\sum_{j=1}^{t} (\phi_j + \nu_j)^\top \Lambda_{j-1}^{-1} (\phi_j + \nu_j) \leq 2 \sum_{j=1}^{t} \log \left(1 + (\phi_j + \nu_j)^\top \Lambda_{j-1}^{-1} (\phi_j + \nu_j)\right).
\end{equation}
This completes the proof by showing that:
\begin{equation}
\log \left[ \frac{\det(\Lambda_t)}{\det(\Lambda_0)} \right] \leq \sum_{j=1}^{t} (\phi_j + \nu_j)^\top \Lambda_{j-1}^{-1} (\phi_j + \nu_j) \leq 2 \log \left[ \frac{\det(\Lambda_t)}{\det(\Lambda_0)} \right].
\end{equation}

With a similar derivation to the one for Eq.~\eqref{equ:feature_norm_w/o_pair}, we can conclude the proof of Eq.~\eqref{equ:feature_norm_w/_pair}.
\end{proof}
Corollary~\ref{corol:general_norm_bound} is indeed a special variant of Lemma~\ref{lemma:feature_norm} with accounting for a case where $\phi_j$ is replaced with $\phi_j + \nu_j$. It is important in RLHF as we estimate the reward model with the BT model over the preference data, which assumes that the preference signal is induced by the reward difference between the preferred response and the dispreferred response (i.e., $\sigma (r(x, y_w) - r(x, y_l))$). Then, under the assumption of linear reward, it further corresponds to the difference in the feature space, i.e., $\phi(x, y_w) - \phi(x, y_l)$.

Now we are ready to prove the main theorem. We restate our main theorem as follows.
\begin{theorem*}[Restatement of Theorem \ref{thm:regret_bound}]
Assume that for each iteration $1\leq t \leq T$, one preference pair sample is collected and added into the dataset in the last iteration $t - 1$. Under Assumption~\ref{assump:linear_r}, if we set $\lambda = 4$ and denote $\iota := \log(1 + (4T)/(d\lambda))$, with probability at least $1 - \delta$, the total regret $\text{\rm Regret}(T)$ after $T$ iterations satisfies
\begin{align*}
    {\text{\rm Regret}}(T) \leq \sqrt{T} \cdot C_1 \cdot \sqrt{\frac{d + \log (1/\delta)}{\gamma^2} + \lambda B^2} \cdot \sqrt{d\iota}
\end{align*}
where $C_1$ is an absolute constant.
\end{theorem*}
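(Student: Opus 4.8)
The plan is to turn the per-iteration guarantee of Theorem~\ref{thm:suboptimality} into a cumulative bound by an elliptical-potential argument. Writing $\nu_t := \mathbb{E}_{x\sim\rho}[\phi(x,\hat\pi_t(x))]$ and $\xi_t := C\sqrt{(d+\log(1/\delta))/(\gamma^2 n_t)+\lambda B^2}$, Theorem~\ref{thm:suboptimality} gives, on the $(1-\delta)$ event, $J_\beta^{\star}(\pi^{\star})-J_\beta^{\star}(\hat\pi_t)\leq 2\xi_t\,\|\nu_t\|_{(\Sigma_{\mathcal{D}_t}+\lambda I)^{-1}}$ for every $t$, so that $\mathrm{Regret}(T)\leq 2\sum_{t=1}^T\xi_t\,\|\nu_t\|_{(\Sigma_{\mathcal{D}_t}+\lambda I)^{-1}}$. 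Since $n_t\geq 1$, each coefficient is dominated by the constant $\bar\xi := C\sqrt{(d+\log(1/\delta))/\gamma^2+\lambda B^2}$, which I pull outside the sum; this is exactly the factor appearing in the target bound. A single Cauchy--Schwarz step over the $T$ summands then yields $\mathrm{Regret}(T)\leq 2\bar\xi\sqrt{T}\,\big(\sum_{t=1}^T\|\nu_t\|_{(\Sigma_{\mathcal{D}_t}+\lambda I)^{-1}}^2\big)^{1/2}$, reducing everything to controlling the sum of squared confidence widths.

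For that sum I would invoke Corollary~\ref{corol:general_norm_bound}. The key object is the regularized cumulative Gram matrix $\Lambda_{t-1}=\lambda I+\sum_{j<t}\Delta\phi_j\Delta\phi_j^{\top}$ built from the feature \emph{differences} $\Delta\phi_j=\phi(x^{(j)},y_w^{(j)})-\phi(x^{(j)},y_l^{(j)})$ of the collected preference pairs; this difference structure (and the bound $\|\Delta\phi_j\|^2\leq 4$) is precisely why the corollary, rather than the plain Lemma~\ref{lemma:feature_norm}, is needed, identifying $\phi_j+\nu_j$ with $\Delta\phi_j$. Choosing $\lambda=4$ makes $\lambda_{\min}(\Lambda_0)=4$, satisfying the corollary's hypothesis, and gives $\sum_{t=1}^T\Delta\phi_t^{\top}\Lambda_{t-1}^{-1}\Delta\phi_t\leq 2\log(\det\Lambda_T/\det\Lambda_0)$. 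A trace/AM--GM bound on the determinant, using $\det\Lambda_0=4^d$ and $\mathrm{tr}(\Lambda_T)\leq 4d+4T$, gives $\log(\det\Lambda_T/\det\Lambda_0)\leq d\log(1+T/d)=d\iota$, where the identity $4T/(d\lambda)=T/d$ for $\lambda=4$ reproduces exactly the definition of $\iota$. Substituting back produces $\sum_t\|\nu_t\|^2_{(\Sigma_{\mathcal{D}_t}+\lambda I)^{-1}}\lesssim d\iota$, and folding all absolute constants into $C_1$ gives the claimed $\widetilde O(\sqrt T)$ bound.

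The step I expect to be the main obstacle is the reduction at the end of the previous paragraph, where the confidence width $\|\nu_t\|_{(\Sigma_{\mathcal{D}_t}+\lambda I)^{-1}}$ of the \emph{policy} feature $\nu_t$ must be matched to the potential sum of the \emph{collected} increments $\Delta\phi_t$. This requires the on-policy accounting that the preference pair added at iteration $t$ is generated by $\hat\pi_t$ (and $\|\nu_t\|\leq 1$ by Jensen), so that its feature difference plays the role of the next vector in the potential sum; without this identification the telescoping of $\log\det\Lambda_t$ does not line up with the regret terms. A secondary technical point is reconciling the $1/n_t$ normalization in the definition of $\Sigma_{\mathcal{D}_t}$ with the unnormalized cumulative matrix $\Lambda_{t-1}$ demanded by the corollary, since the averaged and summed Gram matrices differ by the growing factor $n_t$, and care is needed so this rescaling does not corrupt the $\sqrt T$ rate. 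The mini-batch-$k$ extension then follows by replacing each per-iteration increment with a sum of $k$ rank-one terms, improving the potential bound by the advertised $1/\sqrt k$.
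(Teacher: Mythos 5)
Your proposal follows essentially the same route as the paper's proof: invoke Theorem~\ref{thm:suboptimality} per iteration, absorb the $1/\sqrt{t}$ factor into a uniform constant, apply Cauchy--Schwarz over $t$, and control $\sum_t \|\mathbb{E}_{x\sim\rho}[\phi(x,\hat\pi_t(x))]\|^2_{(\Sigma_{\mathcal{D}_t}+\lambda I)^{-1}}$ via Corollary~\ref{corol:general_norm_bound} together with the trace/AM--GM determinant bound yielding $2d\iota$. The two ``obstacles'' you flag --- matching the policy's expected feature vector to the collected feature \emph{differences} in the elliptical-potential sum, and reconciling the $1/n$-normalized $\Sigma_{\mathcal{D}_t}$ of Lemma~\ref{lemma:bound_MLE} with the unnormalized cumulative Gram matrix --- are genuine and are likewise left implicit in the paper's own argument, so identifying them is a point in your favor rather than a defect of your plan.
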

\begin{proof}
For simplicity, we assume that during $T$ iterations in the online RLHF, the dataset which is initialized as an empty set $\mathcal{D}_0 = \emptyset$, is collected according to the following protocol for every $t = 1, ..., T$,
\begin{enumerate}
    \item Sample one response pair $\{ x, y_w, y_l \}$ from $\hat{\pi}_{t-1}$, label the data with the external labeler, and form the dataset $\mathcal{D}_t$ for current iteration $t$ which satisfies $\mathcal{D}_t = \mathcal{D}_{t-1} \cup \{ x, y_w, y_l \}$;
    \item Compute the output policy $\hat{\pi}_t$ by $\hat{\pi}_t = \mathop{\arg\max}_{\pi} \hat{J}_{\beta}(\pi; \mathcal{D}_t)$.
\end{enumerate}
By Theorem~\ref{thm:suboptimality}, we have
\begin{align}
\label{equ:primary_regret_bound}
    \text{Regret}(T) &= \sum_{t=1}^{T}[J_{\beta}^{\star}(\pi^{\star}) - J_{\beta}^{\star}(\hat{\pi}_t)] \nonumber\\
    &\leq \sum_{t=1}^{T} 2C \cdot \sqrt{\frac{d + \log (1/\delta)}{\gamma^2 t} + \lambda B^2} \cdot \| \mathbb{E}_{x \sim \rho} [\phi(x, \hat{\pi}_t(x))] \|_{(\Sigma_{\mathcal{D}_t} + \lambda I)^{-1}} \nonumber\\
    &\leq 2C \cdot \sqrt{\frac{d + \log (1/\delta)}{\gamma^2} + \lambda B^2} \cdot \sum_{t=1}^{T} \| \mathbb{E}_{x \sim \rho} [\phi(x, \hat{\pi}_t(x))] \|_{(\Sigma_{\mathcal{D}_t} + \lambda I)^{-1}}.
\end{align}
Since we have $\| \phi(x, y) \| \leq 1$ for any $(x, y)$ and $\lambda  = 4$, by Corollary~\ref{corol:general_norm_bound}, we have
\begin{align*}
    \sum_{t=1}^{T} (\mathbb{E}_{x \sim \rho} [\phi(x, \hat{\pi}_t(x))])^{\top}(\Sigma_{\mathcal{D}_t} + \lambda I)^{-1} (\mathbb{E}_{x \sim \rho} [\phi(x, \hat{\pi}_t(x))]) \leq 2\log \left[ \frac{\det(\Sigma_{\mathcal{D}_{T+1}} + \lambda I)}{\det(\Sigma_{\mathcal{D}_{1}} + \lambda I)} \right]
\end{align*}
where $\Sigma_{\mathcal{D}_t} = \sum_{i=1}^{t} (\phi(x^{(i)}, y_{w}^{(i)}) - \phi(x^{(i)}, y_{l}^{(i)}))(\phi(x^{(i)}, y_{w}^{(i)}) - \phi(x^{(i)}, y_{l}^{(i)}))^{\top}$. Note that the trace of $\Sigma_{\mathcal{D}_t} + \lambda I$ holds
\begin{align*}
    {\rm tr}(\Sigma_{\mathcal{D}_t} + \lambda I) = d\lambda + \sum_{i=1}^{t} \| \phi(x^{(i)}, y_{w}^{(i)}) - \phi(x^{(i)}, y_{l}^{(i)}) \|^2 \leq d\lambda + 4t
\end{align*}
where the last inequality results from triangle inequality. Denoting the eigenvalues of $\Sigma_{\mathcal{D}_t} + \lambda I$ by $\{ \lambda_i \}_{i = 1}^{d}$, by AM-GM inequality, we thus have
\begin{align*}
    &\det(\Sigma_{\mathcal{D}_t} + \lambda I) = \prod_{i = 1}^{d} \lambda_i \leq \left( \frac{d\lambda + 4t}{d} \right)^{d}, \\
    \text{then  } &\log \det(\Sigma_{\mathcal{D}_t} + \lambda I) \leq d\log(\lambda + \frac{4t}{d}).
\end{align*}
At the same time, we have $\log\det(\lambda I) < \log\det(\Sigma_{\mathcal{D}_1} + \lambda I)$, thereby leading to
\begin{align*}
    \sum_{t=1}^{T} \| \mathbb{E}_{x \sim \rho} [\phi(x, \hat{\pi}_t(x))] \|_{(\Sigma_{\mathcal{D}_t} + \lambda I)^{-1}}^{2} \leq 2d\log(1 + \frac{4T}{d\lambda})
\end{align*}
Introducing $\iota = \log(1 + \frac{4T}{d\lambda})$, by Cauchy-Schwartz inequality, we further have
\begin{align}
\label{equ:final_norm_bound}
    \sum_{t=1}^{T} \big\| \mathbb{E}_{x \sim \rho} [\phi(x, \hat{\pi}_t(x))] \big\|_{(\Sigma_{\mathcal{D}_t} + \lambda I)^{-1}} &\leq \sqrt{T} \cdot \left[ \sum_{t=1}^{T} \| \mathbb{E}_{x \sim \rho} [\phi(x, \hat{\pi}_t(x))] \|_{(\Sigma_{\mathcal{D}_t} + \lambda I)^{-1}}^{2} \right]^{1/2} \nonumber\\
    &\leq \sqrt{T} \cdot \sqrt{2d\iota}.
\end{align}
Finally, combining Eq.~\eqref{equ:primary_regret_bound} and Eq.~\eqref{equ:final_norm_bound}, we achieve
\begin{align*}
    \text{Regret}(T) \leq \sqrt{T} \cdot C_1 \cdot \sqrt{\frac{d + \log (1/\delta)}{\gamma^2} + \lambda B^2} \cdot \sqrt{d\iota}
\end{align*}
where $\iota := \log(1 + \frac{4T}{d\lambda})$ and $C_1$ is an absolute constant.
\end{proof}

\section{Implementation Details}
\label{app:imple}

In optimizing the COPO objective for LLM preference alignment, we adopt Low-Rank Adaptation (LoRA) \citep{hu2022lora} rather than full-parameter training due to limitations in computing resources. In contrast, the results of baseline methods reported in experiments are obtained by full-parameter tuning, which verifies the effective of our method in low-computation request. The training of our method is conducted on 4xA100-40G GPUs. 

\begin{table}[h!]%
	\centering%
    \small
	\caption{Key implementations of the text generation experiments.}%
	\centering
	\resizebox{0.90\textwidth}{!}{
		\begin{tabular}{cc}%
			\toprule
			\multicolumn{2}{c}{\textbf{Basic Parameters}} \\
			\midrule
			Pre-training & Llama-3-8B Instruct \& Zephyr-7b-SFT \\     
			Hardware& 4 x NVIDIA A100 40G \\
            Datatype & bfloat16 \\
            Fine-tuning strategy & LoRA \citep{hu2022lora}  \\
            LoRA target module & 
             q-proj \& k-proj \& v-proj \& o-proj \& gate-proj \& up-proj \& down-proj \\
            LoRA $r$             & 128  \\
 		LoRA alpha           & 128  \\
			LoRA dropout         & 0.05 \\
			Optimizer            & Adamw torch \\
            Train epoch & 1 \\
   		Per device batch-size  & 2 \\
            Accelerator & Deepspeed Zero3 \\
	      Learning rate & 5e-7 (1st iter), 3e-7 (2nd iter), 1e-7 (3rd iter) \\
            Learning rate scheduler & cosine \\
            Learning rate warmup ratio & 0.1 \\
            Preference dataset & HuggingFaceH4/ultrafeedback\_binarized \\
            Reward model & llm-blender/PairRM \\
			\midrule
			\multicolumn{2}{c}{\textbf{CFN (Ours)}}  \\       
			\midrule
            Learning rate & 1e-4 \\
            Exploration factor ($\alpha$) & 0.1 (Llama) \& 0.01 (Zephyr) \\
            Network architecture & 4096 (last hidden-dim for LLM) $\rightarrow$ 32 $\rightarrow$ 20\\
            Activation & LeakyReLU \& Linear \\
            Train epoch & 1 \\
            $\lambda$ & 0.01 \\
			\bottomrule
		\end{tabular}
	}
	\label{tab:exp_details_text_generation}
\end{table}%

The implementation of COPO build on Alignment Handbook implementation (\url{https://github.com/huggingface/alignment-handbook}), which provides the basic DPO algorithm based on the TRL Repo (\url{https://github.com/huggingface/trl}). For the implementation of iterative DPO, we adopt the same hyper-parameters as in the SELM implementation (\url{https://github.com/shenao-zhang/SELM}). Since COPO also adopt online DPO as the backbone, we use the same hyper-parameters as online-DPO except for the CFN network. 

COPO mainly contains three stages. (1) \textbf{Iterative data collection}. In this stage, we use vLLM 
\url{https://github.com/vllm-project/vllm} as the inference server to perform fast sampling of the LLM policy in the previous iteration. In sampling, we set the
temperature as 0.0 and the top-p as 1.0. After generating a response for each prompt, we use llm-blender with PairRM reward model (\url{https://huggingface.co/llm-blender/PairRM}) to rank all three responses (two from the previous dataset and one from sampling). The final dataset contains the best and worst responses as chosen and reject responses, respectively. We also store the response of the LLM policy in the dataset. (2) \textbf{CFN training.} We perform CFN training in the prompt-response pair sampled in the dataset using the previous LLM policy. We construct a lightweight CFN network and use it similar to a value head that adheres to an LLM via the `AutoModelForCausalLMWithValueHead' class. For inference, the CFN takes the last hidden state of the LLM as input and output the prediction of the coin flips. In regression, we adopt the same `CoinFlipMaker' as in that of online RL (\url{https://github.com/samlobel/CFN}). We use two independent trainers for CFN and online DPO. (3) \textbf{RLHF training.} In this stage, CFN network is keep fixed to provide pseudo-count estimation for the sampled prompt-response pairs, which is used in our exploration objective integrating with online DPO. We summarize the key hyperparameters in Table~\ref{tab:exp_details_text_generation}.

\section{Additional Experiments}

\subsection{Adversarial Cases}

COPO focuses on improving the exploration ability of the LLM. We conduct an adversarial case that further restricts the data coverage of the initial preference dataset. In this case, the exploratory capacity of LLM becomes particularly important. Specifically, we use a subset of the UltraFeedback dataset with only 20\% samples, then we train online DPO, SELM, and COPO for 3 iterations. The performance evaluated on AlpacaEval 2.0 is given in the following table. Based on the results, we find that COPO significantly outperforms other methods where the dataset is quite limited, while other methods have a significant performance drop due to the insufficient exploration ability.
\begin{table}[h!]
\small
\centering
\caption{Comparison to methods with 20\% preference data (LC Win Rate).}%
\vspace{-0.5em}
\begin{tabular}{c|ccc}
\hline
Llama-3-8B-It & DPO (iter1)  & DPO (iter2)  & DPO (iter3)  \\ 
22.92         & 26.60        & 28.01        & 28.16        \\ \hline
& SELM (iter1) & SELM (iter2) & SELM (iter3) \\ 
& 27.81        & 28.79        & 29.25        \\ \hline
& COPO (iter1) & COPO (iter2) & COPO (iter3) \\
& 28.32        & 30.14        & 31.80   \\    \hline
\end{tabular}
\end{table}

\subsection{Comparison to D2PO}

D2PO-related methods also lie on the paradigm of online RLHF, but they mainly focus on how to automatically annotate preference labels for newly generated response pairs of LLMs. Specifically, Self-Rewarding LM \citep{yuan2024self} uses the LLM-as-a-Judge ability of LLMs to evaluate response pairs, Judge-augmented SFT \citep{lee2024aligning} trains a pairwise judgment model to output preference label as well as rationale, and Discriminator-Guided DPO \citep{singhal2024d2po} trains a discriminative evaluation model to generate annotation for synthetic responses. In contrast, COPO directly adopts an off-the-shelf reward model to rank responses generated by LLMs, which is a 0.4B PairRM of small size in our experiment.

We add D2PO \citep{singhal2024d2po} as a baseline by removing the PairRM model and training a discriminator via the Bradley-Terry model. We also change the backbone in D2PO from Llama-2-7B to Llama-3-8B-Instruct. The results on AlpacaEval 2.0 are given as follows. The result shows that the self-trained discriminator achieves competitive performance compared to the online DPO baseline, which uses an off-the-shelf reward model, which signifies the effectiveness of D2PO. However, using a small reward model can be more efficient in practice. 

\begin{table}[!ht]
\small
    \caption{Comparison to D2PO with PairRM.}
    \centering
    \begin{tabular}{c|cccccc}
    \hline
    Method & \makecell[c]{DPO w/ \\PairRM Iter1} & \makecell[c]{DPO w/ \\PairRM Iter2} & \makecell[c]{DPO w/ \\PairRM Iter3} & \makecell[c]{D2PO \\ Iter1} & \makecell[c]{D2PO \\Iter2} & \makecell[c]{D2PO \\Iter3} \\ \hline
LC Win Rate & 20.53 & 22.12 & 22.19 & 20.10 & 21.95 & 22.03 \\ \hline
\end{tabular}
\end{table}

\subsection{Count-based exploration for KTO}

Exploration is essential for RLHF since the preference data are usually limited in data coverage, which makes DPO develop a biased distribution favoring unseen responses, directly impacting the quality of the learned policy. The proposed exploration objective measures the visitation count of the generated prompt-response pair via a coin-flipping network, which can be combined with various online RLHF algorithms. 

In this section, we add a new preference optimization objective in addition to DPO for experiments, i.e., KTO \citep{KTO}. KTO maximizes the utility of generations, rather than just the likelihood of preferences. It works effectively with binary feedback, which is more abundant and easier to collect than the preference data that the DPO requires. We adopt the implementation of the KTO loss function from TRL \footnote{https://huggingface.co/docs/trl/kto\_trainer} and use online iterations for KTO similar to online DPO.  We find that the count-based objective in COPO can also be combined with the online KTO algorithms to further enhance its performance.

\begin{table}[!ht]
\small
\caption{The result comparison of online KTO and count-based KTO.}
\centering
\begin{tabular}{c|cccccc} \hline
~ & \makecell[c]{KTO \\(iter1)} & \makecell[c]{KTO \\(iter2)} & \makecell[c]{KTO \\(iter3)} & \makecell[c]{ KTO+ COPO \\(iter1)} & \makecell[c]{KTO+COPO \\(iter2)}& \makecell[c]{KTO+COPO \\(iter3)} \\ \hline
LC Win Rate & 33.19 & 35.40 & 35.90 & 35.32 & 36.84 & 37.10 \\ \hline
\end{tabular}
\end{table}

\subsection{Ablation of reward model}

Due to the fact that online RLHF methods require an additional reward module to give preference labels compared to the offline method, we adopt a small-size reward model to ensure that the additional requirement is minimal. Specifically, We follow the SELM baseline to use a small-sized PairRM (0.4B) model to rank responses generated by LLM and contain the best and worst responses according to the reward model. As we mentioned in our paper, using a powerful reward model can further improve performance. As a result, we add additional experiments that use a powerful 8B reward model from RewardBench \citep{rewardbench}, named \emph{RLHFlow/ArmoRM-Llama3-8B-v0.1} \citep{rlhflow} for comparison. The learned models evaluated by AlpacaEval 2.0 are given in the following Table. The result shows that both COPO and SELM have improved in performance, with COPO showing a more significant improvement. The reason is that the data coverage of generated responses in COPO is broader than SELM since we adopt count-based exploration in alignment. Then an accurate reward model is required for preference labeling in such a large data coverage. 

\begin{table}[!ht]
\small
\caption{The ablation of more powerful reward model with \emph{ArmoRM-Llama3-8B.}}
\centering
\begin{tabular}{c|cccc} \hline
~ & \makecell[c]{SELM \\(w/  PairRM-0.4B)} & \makecell[c]{SELM \\(w/ ArmoRM-8B)} & \makecell[c]{COPO \\ (w/ PairRM-0.4B)} & \makecell[c]{COPO \\(w/ ArmoRM-8B)} \\ \hline
LC Win Rate & 34.67 & 39.21 & 35.54 & 41.78  \\ \hline
\end{tabular}
\end{table}

\subsection{Discussion of Best-of-N and PPO}

The Best-of-N policy is a method for aligning samples from LLMs to human preferences. It involves drawing $N$ samples, ranking them, and returning the best. For best-of-N, we consider two types of methods to integrate this strategy into the RLHF process. (1) Using best-of-N to improve the quality of preference data in \emph{offline RLHF}. Specifically, we use prompts from the UltraFeedback dataset and regenerate the chosen and rejected response pairs $(y_w, y_l)$ with the LLM. For each prompt $x$, we generate $N$ responses using the SFT model with a low sampling temperature (e.g., 0.8 in the experiment). We then use PairRM to score these responses, selecting the highest-scoring one as $y_w$ and the lowest-scoring one as $y_l$. We denote this method as \emph{DPO-best}. (2) Using best-of-N as an exploration method in \emph{online RLHF}. In each iteration of online DPO, we denote the current LLM policy as $\pi_1$ and the best-of-N variant as $\pi_2$. In this way, the $\pi_2$ policy increases the margins between $\pi_1$ and provides more exploration. We denote this variant by \emph{online-DPO-best}. 

The comparison evaluated in AlpacaEval 2.0 is given in the following table, where we use $n=5$. We find that the best-of-N strategy significantly improves the performance of the offline DPO method, which improves the coverage of the offline dataset. Meanwhile, the best-of-N strategy also enhances online DPO also improves the performance of online DPO. We note that although best-of-N provides another efficient exploration strategy, it is more computationally expensive and still underperforms our method, which signifies that the proposed count-based objective is an efficient and stronger exploration approach. 
We do not apply the best-of-N strategy in evaluation since all methods follow the same evaluation pipeline in a standard benchmark, including AlpacaEval 2.0, MT-Bench, and LLM leaderboard. 

\begin{table}[t]
\small
\caption{The combination of DPO/online-DPO with Best-of-N}
\centering
\begin{tabular}{c|cc|cc|c} \hline
- & DPO & DPO-best & online DPO & online DPO best & ours \\ \hline
LC Win Rate & 22.5  & 26.0 & 33.17 & 34.12 & 35.54  \\ \hline
\end{tabular}
\end{table}

As for PPO, (1) we note that reproducing the successful RLHF results with PPO is challenging as it requires extensive efforts and resources that the open-source communities usually cannot afford, which has been discussed in previous works \citep{ivison2024unpacking}. Specifically, the PPO algorithm requires loading multiple LLMs simultaneously, including the actor (policy), critic (value network), reward model, and reference model (for KL estimation), which places significant pressure on GPU memory that exceeds resources in our group. (2) Meanwhile, the performance of PPO usually heavily relies on the quality of the reward model. Due to the narrow distribution coverage of the preference dataset, reward misspecification often occurs, which makes the reward model assign a high value to out-of-distribution (OOD) samples and has the potential to be exploited during the RL process. In contrast, using a better reward model \citep{rewardbench} trained on a larger dataset will significantly boost the performance, while the comparison to the DPO-based method becomes unfair since it leverages knowledge beyond the fixed preference dataset.

\section{Discussion of other exploration methods}

The use of count-based bonus $1/\sqrt{N(s)}$ was originally proposed in tabular cases to encourage exploration in RL, where the count $N(s)$ of each state $s\in \mathcal{S}$ can be calculated accurately since the total number of states are finite. However, for environments with high-dimensional state space, we cannot obtain the exact count for states since the state space is infinite, thus a pseudo-counting mechanism is required to estimate the $\hat{N}(s)$. In online RL exploration literature \citep{}, several methods have been proposed to estimate the pseudo-counts, including neural density model, hash code, random-network distillation (RND), and CFN. However, we find only CFN is suitable for LLM to estimate the pseudo count of prompt-response pairs for the following reasons. 

(1) For the neural density models \citep{bellemare2016PG}, we have $\hat{N}(s)=(e^{\rm PG_n(x)}-1)^{-1}$, where $\rm PG_n(x)$ is the prediction gain that measures density changes after using the specific state to update the density model. This calculation process is hard to implement in LLM because training a density model for prompt-response pairs is highly expensive \citep{ostrovski2017pixelCNN}, and it is also difficult to measure density changes by using a single prompt-response pair to update the LLM. (2) For hash code \citep{tang2017exploration}, it requires mapping prompt-response pairs to a hash table, and the method assigns exploration bonuses based on the frequency of state visits. This approach is also hard to implement since it requires training an autoencoder (AE) that takes a prompt-response pair as inputs and outputs the hash code in the latent space, then the AE is required to reconstruct the prompt-response pair. Meanwhile, it can suffer from hash collisions, especially for the response space of LLMs, potentially leading to inaccurate exploration objectives. 
(3) RND \citep{burda2018rnd} is a popular exploration method in RL, determined by the output difference between a randomly initialized network and a trained network's prediction of the state. Despite its simplicity and empirical success, RND's exploration bonus is challenging to interpret, as it is based on an unnormalized distance within a neural network's latent space. Additionally, there are no rigorous theoretical results to prove the RND reward is exactly the pseudocount of the state.

Compared to previous works, CFN \citep{lobel2023cfn} represents a significant advance in exploration methods using the Rademacher distribution to estimate pseudo counts. This approach does not rely on density estimation but instead uses the averaging of samples from the Rademacher distribution to derive state visitation counts. CFN's key advantages include its theoretical grounding, which allows it to provide exploration bonuses equivalent to count-based objectives, and its simplicity and ease of training. Unlike other methods, CFN does not impose restrictions on the function approximator or training procedure, offering flexibility in model architecture selection. In our work, we set the CFN as a lightweight network with several fully connected layers on top of a pre-trained hidden layer. The empirical result in online RL also demonstrates CFN's effectiveness, particularly in complex environments, and its ability to generalize well to novel states.
\end{document}